\documentclass{article}

\PassOptionsToPackage{numbers, compress}{natbib}

\usepackage[preprint]{neurips_2026}

\usepackage[utf8]{inputenc} 
\usepackage[T1]{fontenc}    
\usepackage{hyperref}       
\usepackage{url}            
\usepackage{booktabs}       
\usepackage{amsfonts}       
\usepackage{nicefrac}       
\usepackage{microtype}      
\usepackage{xcolor}         
\usepackage{amsmath,amssymb,amsthm, natbib}
\usepackage{graphicx}
\usepackage{multirow}
\usepackage[table]{xcolor}
\usepackage{wrapfig}
\usepackage{booktabs}
\theoremstyle{plain}
\newtheorem{theorem}{Theorem}[section]
\newtheorem{proposition}[theorem]{Proposition}

\theoremstyle{definition}

\theoremstyle{remark}

\title{Why Semantic Entropy Fails: Geometry-Aware and Calibrated Uncertainty for Policy Optimization}

\author{%
    Zheyuan Zhang\textsuperscript{1}, 
    Kaiwen Shi\textsuperscript{1}, 
    Han Bao\textsuperscript{1},
    Zehong Wang\textsuperscript{1}, 
    Tianyi Ma\textsuperscript{1},
    Yanfang Ye\textsuperscript{1}\textsuperscript{$\dagger$}\\
    \textsuperscript{1}University of Notre Dame, 
    \textsuperscript{$\dagger$}Corresponding Author
    \\
    \texttt{\{zzhang42, yye7\}@nd.edu}
}

\begin{document}

\maketitle

\begin{abstract}
Post-training has become central to improving reasoning and alignment in large language models, where critic-free models enable scalable learning from model-generated outputs but lack principled mechanisms to distinguish informative from noisy signals. Recent approaches leverage response-level measures as uncertainty signals to regulate group-based optimization methods such as GRPO. Yet their empirical success remains unstable and unclear in how they influence optimization dynamics. In this paper, we provide, to our knowledge, the first principled formulation that interprets uncertainty signals as mechanisms for characterizing and regulating gradient variance and learning signal quality. Based on both empirical and theoretical analysis, we identify two critical gaps of current entropy-based estimators: \textbf{\emph{The anisotropic gap}} and \textbf{\emph{The calibration gap}}. Motivated by this analysis, we propose \textbf{Geometric-aware Calibrated Policy Optimization (GCPO)}, a novel framework integrating geometry-aware measures to capture semantic disagreement with reward-based calibration to align uncertainty with learning signal strength. Experiments on multiple benchmarks show that our approach more faithfully tracks gradient variability and consistently improves post-training performance. Our results highlight the importance of designing uncertainty signals that are aligned with optimization dynamics, offering a principled perspective for robust post-training.
\end{abstract}

\section{Introduction}

Despite the remarkable success of large-scale pretraining, recent Large Language Model (LLM) development increasingly shifts toward post-training to elicit reasoning, alignment, and task-specific behaviors \cite{ye2025llms4all,tong2024dart, rafailov2023direct,chen2025clear, team2025kimi,wang2026reasoning,li2026graph}. In this regime, the central challenge is not only how to scale model capacity, but how to reliably extract learning signals from environments or model-generated outputs. A growing line of work, represented by Group Relative Policy Optimization (GRPO) \cite{shao2024deepseekmath}, adopts a critic-free, group-based formulation, where multiple responses are sampled for each input and optimized using relative rewards \cite{liu2025understanding, ma2026non, ahmadian2024back}. While this paradigm significantly simplifies supervision and improves scalability, it introduces a key limitation: all queries are treated uniformly, despite the fact that different queries can induce vastly different levels of uncertainty and learning signal quality \cite{dong2025agentic, dong2026agentic}.

To address this issue, a promising direction is to introduce uncertainty signals as a mechanism to select high quality model-generated outputs and regulate learning accordingly. Early approaches rely on token-level entropy to quantify uncertainty in the model distribution \cite{fadeeva2023lm, fadeeva2024fact,wang2024gft}. More recent work extends uncertainty estimation to the response level, forming a family of \emph{semantic entropy-type measures}, including semantic entropy \cite{chen2025seed, zhang2025right}, predictive entropy \cite{duan2025uprop}, kernel-based variants \cite{nikitin2024kernel}, and self-consistency scores \cite{chen2025grpo, wen2025scenario}. These methods capture uncertainty over meaning rather than surface form, showing empirical benefits in improving reasoning performance and training stability. \textbf{However, despite their empirical success, the role of such uncertainty signals in optimization remains poorly understood.} Existing approaches largely treat uncertainty as a heuristic proxy, without explicitly modeling how it interacts with gradient updates and learning dynamics. \textbf{As a result, it remains unclear whether these signals faithfully reflect gradient variance and learning signal quality.} If poorly handled, such methods may simply suppress high-variance informative samples, thereby collapsing gradient signal and potentially leading to suboptimal or failed policy learning.

In this paper, we provide, to the best of our knowledge, \textbf{the first principled analysis that explicitly formulates uncertainty signals as mechanisms for characterizing and regulating gradient variance and learning signal quality.} Based on both theoretical and empirical statistical analysis, we reveal two fundamental gaps of existing methods under this formulation: 1) \textbf{\emph{The anisotropic gap}}, where entropy-based measures capture probability dispersion but ignore the geometric magnitude of semantic disagreement, although different errors may induce substantially different gradient directions. 2) \textbf{\emph{The calibration gap}}, where entropy is decoupled from reward informativeness, and therefore cannot distinguish uninformative uncertainty from useful reward-differentiated variation. 

\textbf{Motivated by the analysis,} we then propose \textbf{G}eometric-aware reward-\textbf{C}alibrated \textbf{P}olicy \textbf{O}ptimization \textbf{(GCPO)}. Instead of treating uncertainty as purely distributional dispersion, our framework explicitly models uncertainty as a mechanism for \emph{characterizing and calibrating gradient variance with respect to learning signal quality}. Specifically, we introduce geometry-aware measures, including \emph{Cosine Dispersion (CD) and Barycentric Transport (BoT)}, to capture semantic disagreement beyond entropy, and further incorporate a \emph{Reward Dispersion (RD)} module to align update strength with reward informativeness. Extensive experiments on QA and math reasoning benchmarks show that our proposed measures better align with sample-level gradient variance and consistently improve GRPO-style optimization over entropy-based baselines. These findings provide empirical evidence for our formulation, suggesting a principled guideline for designing uncertainty signals that preserve informative gradients while suppressing spurious variance in LLM post-training. Overall, our contributions can be summarized as follows.

\begin{itemize}
    \item We provide, to our best knowledge, the first principled analysis that explicitly formulates entropy-type uncertainty signals as mechanisms for characterizing and regulating gradient variance and learning signal quality, revealing two fundamental limitations: \textbf{\emph{the anisotropic gap}} and \textbf{\emph{the calibration gap}}, which explain both their effectiveness and failure modes.

    \item We propose \textbf{G}eometric-aware reward-\textbf{C}alibrated \textbf{P}olicy \textbf{O}ptimization \textbf{(GCPO)}, a framework that models uncertainty as a mechanism for \emph{calibrating gradient variance}, combining geometry-aware semantic structure with reward-based calibration to selectively suppress noisy variance while preserving informative learning signals.

    \item Extensive experiments show that our method achieves stronger alignment and consistently improves GRPO-style training across multiple tasks, supporting a principled guideline for designing uncertainty signals preserving informative gradients while avoiding collapse.
\end{itemize}

\section{Why Semantic Entropy-based Measures Fail?}
\label{sec:preliminary}

\subsection{Preliminary}

\textbf{GRPO. } 
Group Relative Policy Optimization (GRPO) is a critic-free policy gradient method for reasoning-oriented RL \cite{shao2024deepseekmath}. For each input $x$, it samples a group of responses $Y=\{y_i\}_{i=1}^G$ and estimates the gradient using normalized rewards:
\begin{equation}
\hat{g}(x) = \frac{1}{G} \sum_{i=1}^{G} \hat{A}_i \, \nabla_\theta \log \pi_\theta(y_i \mid x), \quad
\hat{A}_i = \frac{R(y_i) - \mu_R}{\sigma_R + \epsilon}.
\end{equation}

\vspace{-5pt}

\textbf{Why Focus on GRPO?}
GRPO is known for its strong and widely adopted optimization paradigm and its critic-free, rollout-based formulation makes it particularly well-aligned with the study of both \emph{token-level and response-level uncertainty}. Unlike actor-critic methods that rely on learned value functions for variance reduction, GRPO exposes gradient variance at the query level and provides a natural setting for analyzing how response-level uncertainty influences optimization.

\textbf{Semantic Entropy-type Measures. }
Token-level entropy measures uncertainty over surface forms, and therefore is sensitive to paraphrastic variation that does not affect meaning. As a result, it often overestimates uncertainty in reasoning tasks where multiple responses differ only in wording but share the same underlying semantics \cite{chen2025seed, nikitin2024kernel}. To address this limitation, semantic entropy \cite{kuhn2023semantic} measures uncertainty at the response level by aggregating semantically equivalent outputs, thereby focusing on variability across distinct meanings rather than surface realizations. This formulation has led to a broader class of response-level entropy-like methods, including predictive entropy, kernel-based variants, and other consistency-based scores, all of which quantify uncertainty via grouped responses rather than token-level distributions. Formally, we denote this family by $H_{sem}$. For instance, in semantic entropy, $H_{\mathrm{sem}}(x) = - \sum_{k=1}^{K} C_k \log C_k$, where $C_k$ denotes the probability mass of semantic cluster $k$. In predictive entropy, $H_{\mathrm{sem}}(x) = - \sum_{k=1}^{K} q_k \log q_k$, where $q_k$ represents the mass of prediction groups. In this work, for simplicity, we treat semantic entropy as a representative instance of this class. \textbf{Note that despite our focus on response-level entropy, the failure patterns analyzed below also apply for token entropy. Therefore we may use entropy to refer to both measures for simplicity unless otherwise explained.}

\begin{figure*}[t]
	\centering
	\includegraphics[width=1\linewidth]{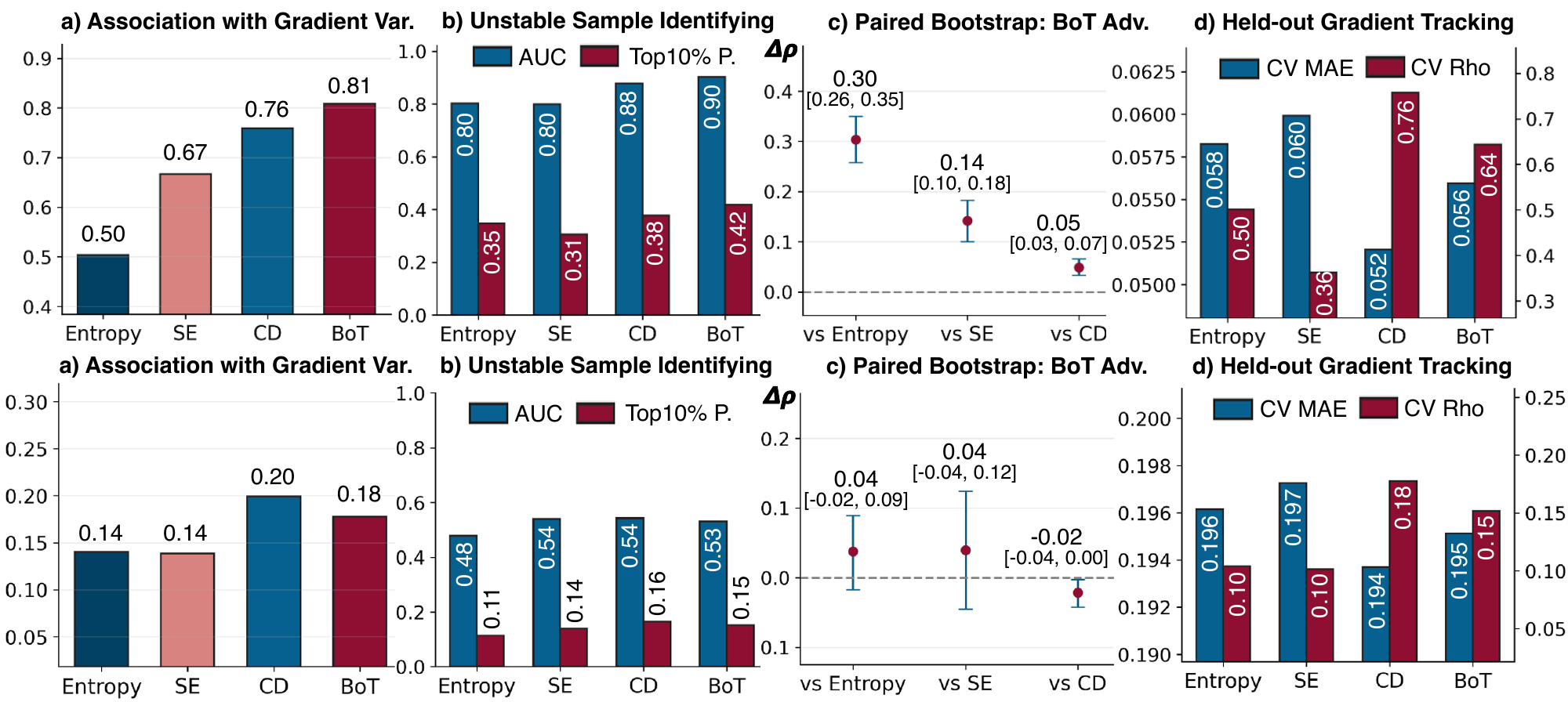}
        \vspace{-15pt}
	\caption{Statistical analysis of uncertainty vs. gradient variance. Top row: NarrativeQA; bottom row: Qasper. \textbf{Overall, geometry-aware measures (CD, BoT) better align with gradient variance than entropy-based measures, with stronger effects on NarrativeQA.} Due to space constraints, full experimental details of statistical analysis are provided in Appendix~\ref{app:stats}.
    }
        \vspace{-15pt}
    \label{fig:prelim}
\end{figure*}

\subsection{The Anisotropic Gap}
\label{sec:a_gap}

Prior research has established that the stability and quality of policy optimization are governed by the signal-to-noise ratio of gradient estimates, making variance control a key factor for stable and effective reinforcement learning \cite{sutton2018reinforcement, cui2025entropy, roberts2008signal}. To understand how response-level uncertainty interacts with optimization from this perspective, we analyze the structure of gradient variance under semantic grouping. Let $Y \sim \pi_\theta(\cdot \mid x)$ denote a sampled response, and let $Z = c(Y) \in \{1,\dots,K\}$ denote its semantic cluster label. Since GRPO operates on sampled responses, variability across distinct semantic modes directly influences reward differences, thus gradient updates, making response-level uncertainty particularly relevant. Let $g(Y) = \nabla_\theta \log \pi_\theta(Y \mid x)$. By the law of total covariance \cite{casella2024statistical},
\begin{equation}
\mathrm{Cov}(g \mid x)
=
\mathbb{E}\!\left[\mathrm{Cov}(g \mid Z, x)\mid x\right]
+
\mathrm{Cov}\!\left(\mathbb{E}[g \mid Z, x]\mid x\right).
\end{equation}
Taking the trace yields
\begin{equation}
\mathcal{V}(x)
=
\underbrace{\sum_{k=1}^{K} p_k(x)\,\mathrm{Tr}\!\big(\mathrm{Cov}(g \mid Z = k, x)\big)}_{\mathcal{V}_{\mathrm{intra}}(x)}
+
\underbrace{\mathrm{Tr}\!\big(\mathrm{Cov}(\mu_Z \mid x)\big)}_{\mathcal{V}_{\mathrm{inter}}(x)},
\end{equation}
where $p_k(x) := \mathbb{P}(Z = k \mid x)$ and $\mu_k := \mathbb{E}[g \mid Z = k, x]$. This decomposition separates gradient variability into intra-cluster variation within each semantic mode and inter-cluster variation across modes. Since $H_{\mathrm{sem}}$ depends only on the distribution $\{p_k(x)\}$, it primarily captures dispersion across semantic modes and is therefore more aligned with the inter-cluster component of gradient variance.

\begin{proposition}
\label{prop:a_gap}
\textbf{The Anisotropic Gap.} 
To be specific, SE-type measures depend only on the cluster mass distribution $\{\mathbb{P}(Z = k \mid x)\}_{k=1}^K$, and therefore can only reflect coarse inter-cluster dispersion, while discarding the intra-cluster variance term $\mathcal{V}_{\mathrm{intra}}(x)$, effectively assuming an \textit{isotropic error space} where all disagreements are equally severe.
\vspace{-5pt}
\end{proposition}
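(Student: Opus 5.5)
The plan is to make the statement precise as a non-identifiability claim and then prove it with explicit constructions. I would formalize an SE-type measure as any functional $H_{\mathrm{sem}}(x)=\Phi(p_1(x),\dots,p_K(x))$ that depends only on the cluster masses. The proposition then amounts to two assertions. (i) $H_{\mathrm{sem}}$ is invariant under every change of the conditional gradient law $\{g\mid Z=k,x\}$ that preserves $\{p_k\}$. In particular it carries no information about $\mathcal{V}_{\mathrm{intra}}(x)$. (ii) Even for the inter-cluster term, $H_{\mathrm{sem}}$ only captures coarse dispersion, because $\mathcal{V}_{\mathrm{inter}}(x)$ also depends on the geometry of the cluster means $\{\mu_k\}$. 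Equivalently, $H_{\mathrm{sem}}$ implicitly treats all pairs of clusters as equally distant, which is the isotropic assumption.

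First, I would rewrite the inter-cluster term using the decomposition from Section~\ref{sec:a_gap}. Let $\bar\mu=\sum_k p_k\mu_k$. Then
\begin{equation*}
\mathcal{V}_{\mathrm{inter}}(x)=\sum_k p_k\|\mu_k-\bar\mu\|^2=\tfrac12\sum_{j,k}p_jp_k\|\mu_j-\mu_k\|^2 .
\end{equation*}
This is a routine expansion, and it makes explicit that $\mathcal{V}_{\mathrm{inter}}$ is a function of $\{p_k\}$ together with the pairwise distances $\|\mu_j-\mu_k\|$.

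Second, I would settle the intra-cluster part by construction. Fix $\{p_k\}$ and $\{\mu_k\}$. Replace each conditional law $g\mid Z=k$ by $\mu_k+\sigma\xi_k$, where $\xi_k$ is any zero-mean perturbation of unit trace covariance. Then $\mathcal{V}_{\mathrm{intra}}=\sigma^2$ ranges over $[0,\infty)$, while both $H_{\mathrm{sem}}$ and $\mathcal{V}_{\mathrm{inter}}$ stay unchanged. Hence no function of $H_{\mathrm{sem}}$ can track $\mathcal{V}$, since two configurations with identical entropy have arbitrarily different total variance.

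Third, I would treat the inter part. Fix $\{p_k\}$, so $H_{\mathrm{sem}}$ is fixed, and scale or rotate the cluster means, e.g.\ $\mu_k\mapsto \lambda\mu_k$. This gives $\mathcal{V}_{\mathrm{inter}}\mapsto\lambda^2\mathcal{V}_{\mathrm{inter}}$, and anisotropic placements, where some pairs are close and others far apart, give different values at the same entropy. I would then show that the only case in which $\mathcal{V}_{\mathrm{inter}}$ is a function of $\{p_k\}$ alone is when all pairwise distances are equal, $\|\mu_j-\mu_k\|=d$ for $j\neq k$. In that case
\begin{equation*}
\mathcal{V}_{\mathrm{inter}}=\tfrac{d^2}{2}\Big(1-\sum_k p_k^2\Big),
\end{equation*}
which is the Gini/R\'enyi-2 dispersion. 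It is monotonically related to entropy-type dispersion in spirit, though not exactly, so I would state the link as qualitative alignment rather than identity.

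The main obstacle is realizability rather than algebra. Here $g$ is a score function $\nabla_\theta\log\pi_\theta(Y\mid x)$ with $\mathbb{E}[g]=0$, so one must check that the constructed conditional laws correspond to an actual policy. I would handle this by working with a finite response set with free logits, a tabular softmax parametrization, so that the score vectors $e_y-\pi$ are explicit. Within-cluster masses and response counts can then be tuned to realize varying intra-cluster spread and varying mean geometry while holding the cluster masses $p_k$ fixed. If exact scaling is awkward in this model, it suffices to exhibit two concrete policies with equal $\{p_k\}$ but different $\mathcal{V}_{\mathrm{intra}}$ and $\mathcal{V}_{\mathrm{inter}}$.
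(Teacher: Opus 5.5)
Your proposal is correct in substance, but it argues differently from the paper. The paper's support (Appendix~\ref{app:math_proofs}) is an \emph{upper-bound} argument. It writes the variance in the pairwise form $\frac12\sum_{i,j}\pi_i\pi_j\|\mu_i-\mu_j\|^2$; the paper presents this as an approximation to $\mathcal{V}$, but as you observe it is exactly $\mathcal{V}_{\mathrm{inter}}$. It then replaces every pairwise distance by a worst-case constant to get $\frac{\Delta_{\max}^2}{2}(1-\sum_k\pi_k^2)$, passes to Shannon entropy via Gini $\le H$ (from $\ln x\le x-1$), and locates the isotropy in the slack $\frac12\sum_{i\neq j}\pi_i\pi_j(\Delta_{\max}^2-\|\mu_i-\mu_j\|^2)$. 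That slack vanishes when all positively weighted modes sit pairwise at distance $\Delta_{\max}$.

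You argue by \emph{non-identifiability} instead: hold the masses, and hence $H_{\mathrm{sem}}$, fixed and exhibit arbitrarily different $\mathcal{V}_{\mathrm{intra}}$ and $\mathcal{V}_{\mathrm{inter}}$. The two routes meet at the same equidistant formula $\frac{d^2}{2}(1-\sum_k p_k^2)$. Your route actually establishes the ``discards $\mathcal{V}_{\mathrm{intra}}$'' and ``only coarse'' clauses, which the paper never proves: it drops the intra term by approximation, and an upper bound alone cannot show that entropy fails to track $\mathcal{V}$. The paper's route supplies the positive half you leave as ``monotone in spirit'': entropy is a genuine though loose bound, $\mathcal{V}_{\mathrm{inter}}\le\frac{\Delta_{\max}^2}{2}(1-\sum_k p_k^2)\le\frac{\Delta_{\max}^2}{2}H(p)$, with quantified slack. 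You can simply add that chain.

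Three repairs are needed.

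\textbf{The equidistance characterization.} ``$\mathcal{V}_{\mathrm{inter}}$ is a function of $\{p_k\}$ alone'' is vacuous for a fixed geometry, since $\frac12 p^\top D p$ with $D_{jk}=\|\mu_j-\mu_k\|^2$ is itself a mass-only functional. The right claim is that this quadratic form is invariant under relabelling the clusters, as every SE-type measure is, iff $D$ is constant off the diagonal. For the proof, note that two quadratic forms agreeing on the simplex agree, by homogeneity, on the open orthant and hence on all of $\mathbb{R}^K$. So $D=P^\top DP$ for every permutation matrix $P$, which forces constant off-diagonal entries.

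\textbf{Rotation.} A global rotation of the means preserves all pairwise distances and therefore leaves $\mathcal{V}_{\mathrm{inter}}$ unchanged. Only scalings or non-rigid rearrangements change it.

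\textbf{Realizability.} The tabular softmax is too rigid for your second step. Its scores are $e_y-\pi$, so $\mathcal{V}=1-\sum_y\pi_y^2\le 1$. With equal cluster masses one gets $\mathcal{V}_{\mathrm{intra}}=1-\frac{K}{K-1}\mathcal{V}_{\mathrm{inter}}$. So neither $\sigma\to\infty$ nor varying one term with the other held fixed is possible there. Use instead the tilted family $\pi_\theta(y)\propto\pi_0(y)\exp(\theta^\top\phi(y))$ at $\theta=0$, whose scores are $\phi(y)-\mathbb{E}_{\pi_0}\phi$ with free features. Both variance terms are translation invariant, so every finite configuration of score vectors is then realizable and the obstacle you flagged disappears.
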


\textbf{In reality, reasoning errors are highly \textit{anisotropic}}: a near-miss (e.g., a minor mistake in an otherwise correct reasoning chain) induces a gradient that remains largely aligned with the correct solution, whereas a fundamentally incorrect reasoning path leads to a substantially different, often conflicting, gradient direction. Since $H_{\mathrm{sem}}$ ignores such intra-cluster variability, it cannot distinguish between these cases. As a result, controlling $H_{\mathrm{sem}}$ only constrains coarse mode dispersion while providing a loose surrogate for the true gradient variance $\mathcal{V}(x)$, leaving potentially significant optimization-relevant variation unaccounted for. \textbf{Empirically}, Figure~\ref{fig:prelim}(a,c) shows that entropy exhibits weaker correlation with gradient variance and is consistently outperformed by geometry-aware measures under bootstrap \textit{(reflecting gradients not performance)}, supporting the anisotropic gap (Appendix~\ref{app:stats}).

This anisotropic gap also arises in token-level entropy, where entropy can be used to upper-bound gradient variance under a uniformity assumption. Intuitively, minimizing $H(x)$ reduces a worst-case upper bound on variance by treating all pairwise disagreements as equally severe. Formally,
\begin{equation}
    \mathcal{V}(x) \approx \frac{1}{2} \sum_{i=1}^K \sum_{j=1}^K \pi_i \pi_j \|\mu_i - \mu_j\|^2 
    \;\le\; \frac{\Delta_{\max}^2}{2} H(x),
    \label{eq:pairwise_var}
\end{equation}
where the variance is driven by two factors: the \emph{probability dispersion} ($\pi_i \pi_j$) and the \emph{disagreement magnitude} ($\|\mu_i - \mu_j\|^2$). The bound replaces all pairwise disagreements with a uniform worst-case constant $\Delta_{\max}^2$ (i.e., $\|\mu_i - \mu_j\|^2 \le \Delta_{\max}^2$), thereby collapsing heterogeneous geometric structure into a single scale. As a result, entropy provides only a loose, ineffective upper bound on the true variance when disagreement is anisotropic. Detailed derivations are provided in Appendix~\ref{app:math_proofs}.

\subsection{The Calibration Gap}

While the anisotropic gap highlights a structural limitation of entropy-based measures, a second challenge arises from their role in optimization dynamics. In GRPO, the update is driven by reward-induced contrast and the variance of this update can be expressed as $\mathrm{Var}(\hat{g}(x)) =  \mathrm{Var}\big(\hat{A}(Y)\, g(Y)\big)$, where $Y \sim \pi_\theta(\cdot \mid x)$. This expression reveals a key coupling: the variability of $\hat{A}(Y)$ is governed by reward variance within the rollout group, while the variability of $g(Y)$ reflects how different responses induce different update directions. When rollouts are highly concentrated, rewards collapse around their average and $\hat{A}(Y)$ vanishes, leading to weak or zero updates. In contrast, when rollouts are extremely dispersed, both $\hat{A}(Y)$ and $g(Y)$ exhibit large variability, increasing gradient signals but also potentially increasing noise and destabilizing optimization. However, entropy-based methods such as $H_{\mathrm{sem}}(x)$ provide no information about how informative the sampled responses are for learning. In particular, inputs with similar uncertainty can induce fundamentally different optimization behaviors depending on the dispersion of their rewards, which directly determines the scale of the normalized advantage $\hat{A}(Y)$. \textbf{Empirically}, Figure~\ref{fig:prelim}(b) shows entropy and semantic entropy are less effective at retrieving high-variance samples, indicating misalignment with reward-driven learning signals. Moreover, Figure~\ref{fig:prelim}(d) shows weaker held-out alignment for entropy-based measures, indicating that this misalignment with reward-driven variability generalizes beyond observed samples. (Appendix~\ref{app:stats}).

Therefore, when existing methods solely suppress the advantage of high uncertainty tasks \cite{zhang2025right, chen2025grpo}, they implicitly suppress the variety of responses and potentially reward variance, further suppressing the tasks which provides rich learning signals: 
\begin{equation}
\mathbb{E}_{x}\big[\mathrm{Var}(\hat{A}(Y) \mid x)\big]
\;\downarrow
\quad \Longrightarrow \quad
\mathbb{E}_{x}\big[\|\hat{g}(x)\|\big]
\;\downarrow,
\end{equation}
resulting in weaker updates and diminished learning signals. More broadly, this induces a biased training distribution. Let $\mathcal{D}_H := \{x : H_{\mathrm{sem}}(x) \le \tau\}$ denote the subset retained after entropy filtering. Since high-entropy inputs are more likely to contain diverse and reward-differentiated responses, removing them reduces the overall gradient variability:
\begin{equation}
\mathbb{E}_{x \sim \mathcal{D}_H} \big[ \mathrm{Var}(\hat{g}(x)) \big]
\;\ll\;
\mathbb{E}_{x \sim \mathcal{D}} \big[ \mathrm{Var}(\hat{g}(x)) \big],
\end{equation}
thereby suppressing the primary source of corrective learning signals in reasoning tasks. Formally:

\begin{proposition}
\textbf{The Calibration Gap.}
SE-type measures decouple uncertainty from reward, and thus fail to capture reward-aligned uncertainty. As a result, entropy-based suppression systematically removes high-variance, high-signal samples, inducing a biased optimization process that underestimates gradient variance and weakens policy improvement.
\end{proposition}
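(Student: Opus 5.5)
The plan is to make the informal statement precise in three parts and prove each part with a short argument or a counterexample construction, rather than aiming for one general inequality.

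\textbf{Step 1 (decoupling).} I would show that $H_{\mathrm{sem}}$ is not a function of the reward distribution. Fix a cluster mass vector $\{p_k\}_{k=1}^K$, so $H_{\mathrm{sem}}(x)$ is fixed. Then construct two inputs $x_1, x_2$ with this same mass vector but different rewards.
\begin{itemize}
\item For $x_1$, the reward is constant across clusters. Then $\sigma_R = 0$, so $\hat{A}_i \equiv 0$ and $\hat{g}(x_1)=0$.
\item For $x_2$, the rewards differ across clusters, e.g.\ $R\in\{0,1\}$ split between clusters. Then $\mathrm{Var}(\hat{A}\mid x_2)>0$.
\end{itemize}
Using $\mathrm{Var}(\hat{g}(x))=\mathrm{Var}(\hat{A}(Y)g(Y))$ from the preceding discussion, with $\mu_k$ distinct, $\mathrm{Var}(\hat{g}(x_2))>0=\mathrm{Var}(\hat{g}(x_1))$. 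So equal entropy is compatible with arbitrarily different reward-driven learning signal. This is the precise sense of ``fails to capture reward-aligned uncertainty.''

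\textbf{Step 2 (monotone link between dispersion and reward variance).} I would make ``high-entropy inputs contain reward-differentiated responses'' a stated assumption. For binary rewards with correct mass $p(x)$, the per-response reward variance is $p(1-p)$. Merging clusters so that the correct answer is a single cluster gives $H_{\mathrm{sem}}\ge h(p)$, where $h$ is the binary entropy. Both $h(p)$ and $p(1-p)$ are maximized at $p=1/2$ and vanish at $p\in\{0,1\}$. Low entropy therefore forces $p$ near $0$ or $1$, which forces small reward variance:
\[ p(1-p)\le \min(p,1-p)\le h^{-1}(H_{\mathrm{sem}}), \]
using the inverse of $h$ on $[0,1/2]$. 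The finite-group $\hat{A}$ is then zero with high probability, since $\Pr[\text{all rewards equal}]=p^G+(1-p)^G$.

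\textbf{Step 3 (bias of filtering).} Let $\mathcal{D}_H=\{x:H_{\mathrm{sem}}(x)\le\tau\}$. Write $\mathbb{E}_{\mathcal{D}}[V]=\Pr(\mathcal{D}_H)\,\mathbb{E}_{\mathcal{D}_H}[V]+\Pr(\mathcal{D}_H^c)\,\mathbb{E}_{\mathcal{D}_H^c}[V]$ with $V=\mathrm{Var}(\hat{g}(x))$. By Step 2, under an assumption that $\|\mu_i-\mu_j\|$ is bounded below, $V$ is bounded above on $\mathcal{D}_H$ by a quantity increasing in $\tau$. It is bounded below on $\mathcal{D}_H^c$ only if high entropy coincides with reward differentiation. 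Given such a positive association, the conditional mean over $\mathcal{D}_H$ is at most the unconditional mean, with strict inequality. Likewise, reweighting advantages by a decreasing function of $H_{\mathrm{sem}}$ lowers $\mathbb{E}_x\|\hat g(x)\|$. This gives ``systematically removes high-variance samples.''

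\textbf{Main obstacle.} The hard part is Step 3's association assumption. Step 1 shows entropy and reward variance can decouple, so the claim that filtering removes high-signal samples cannot hold for all distributions. I would state it as a Harris/FKG-type positive covariance between $H_{\mathrm{sem}}$ and $V$ over $x\sim\mathcal{D}$, justified in the binary-reward case by Step 2. The strong ``$\ll$'' would be presented only as a quantitative illustration, not proved in general.
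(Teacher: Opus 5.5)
Your plan follows the same route as the paper, whose argument is itself informal. The paper makes three moves. It notes that $H_{\mathrm{sem}}$ carries no information about reward dispersion. It notes that concentrated rollouts make $\hat A$ vanish. It then claims that filtering to $\mathcal{D}_H$ lowers average gradient variance \emph{because} ``high-entropy inputs are more likely to contain diverse and reward-differentiated responses.'' That premise is asserted and supported only by Figure~\ref{fig:prelim}(b,d), never proved. Your Steps 1--3 sharpen exactly these three moves: an explicit equal-entropy counterexample, a quantitative binary-reward link $H_{\mathrm{sem}}\ge h(p)$, and the filtering bias stated conditionally on an explicit association hypothesis. Your observation that Step 1 makes the unconditional ``systematically removes'' clause false exposes a tension the paper leaves implicit. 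Making the association a stated hypothesis, rather than a theorem, is the right call.

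Three points in your sketch need repair.

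\textbf{Wrong-direction assumption in Step 3.} A \emph{lower} bound on $\|\mu_i-\mu_j\|$ cannot give an \emph{upper} bound on $V$ over $\mathcal{D}_H$. What you need there is a bounded score, $\|g(Y)\|\le B$. Within any group, $\frac1G\sum_i\hat A_i^2\le 1$, and $\hat A\equiv 0$ on reward-degenerate groups. Hence $V=\mathrm{Var}(\hat g(x))\le\mathbb{E}\|\hat g(x)\|^2\le B^2\bigl(1-p^G-(1-p)^G\bigr)$, and Step 2 turns this into a bound increasing in $\tau$. A lower bound on disagreement is useful only on $\mathcal{D}_H^c$, to convert reward differentiation into large $V$.

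\textbf{Step 2 does not justify the association.} Step 2 gives only an upper envelope $V\le\phi(H_{\mathrm{sem}})$ with $\phi$ increasing. It says nothing about high-entropy inputs, which can have $V=0$ when rollouts scatter over many uniformly wrong clusters. This is precisely the Qasper regime the paper describes in Appendix~\ref{app:stats}. An envelope cannot imply $\mathbb{E}[V\mid H_{\mathrm{sem}}\le\tau]\le\mathbb{E}[V]$. For instance, put half the mass at low entropy with $V=\phi$ and half at high entropy, all-wrong, with $V=0$. So the association is a genuinely extra distributional hypothesis, with the same status as the paper's premise. It should be phrased as $\mathrm{Cov}(\mathbf{1}\{H_{\mathrm{sem}}>\tau\},V)\ge 0$, or as $\mathbb{E}[V\mid H_{\mathrm{sem}}]$ nondecreasing. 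A positive linear covariance between $H_{\mathrm{sem}}$ and $V$ does not control every threshold.

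\textbf{Step 2 needs a hidden hypothesis.} The inequality $H_{\mathrm{sem}}\ge h(p)$ requires correctness to be constant on each semantic cluster, so that the cluster partition refines $\{\text{correct},\text{incorrect}\}$. Suppose NLI clustering of full solutions merges a near-miss with a correct solution, the case stressed in the anisotropic-gap discussion. Then a single cluster can have $H_{\mathrm{sem}}=0$ with $p=1/2$, and the bound fails. State this cluster-measurability of rewards explicitly as a hypothesis of Step 2.
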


\textbf{Intuitively,} entropy-based methods treats all dispersion as noise, while in RL optimization, dispersion often encodes \emph{learning opportunity}. High-entropy inputs frequently correspond to ambiguous or partially solved problems where reward signals differentiate competing reasoning paths. By ignoring this alignment, $H_{\mathrm{sem}}$ conflates \emph{uninformative uncertainty} with \emph{useful disagreement}, leading to mis-calibrated uncertainty estimates and suboptimal training dynamics. This calibration gap is particularly pronounced in reasoning settings, where meaningful progress relies on precisely those high-variance trajectories that entropy-based methods tend to suppress.

\begin{figure*}[t]
	\centering
	\includegraphics[width=1\linewidth]{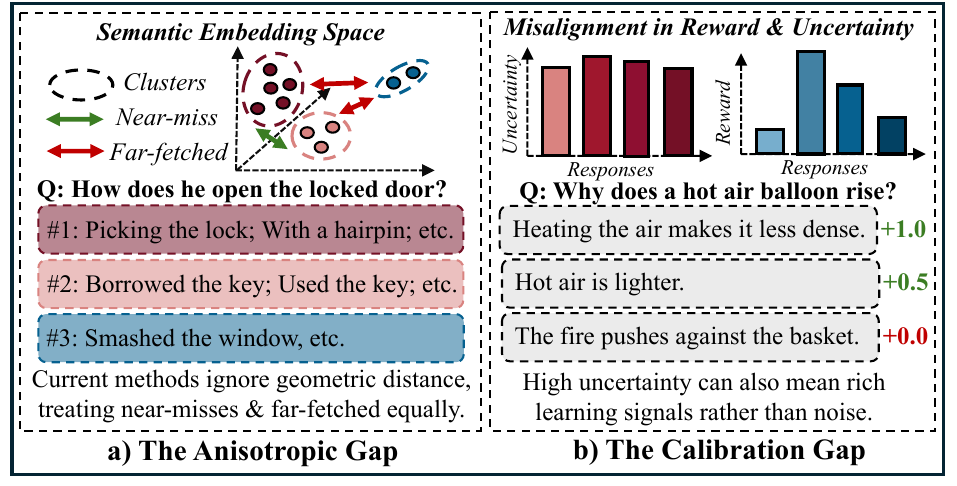}
        \vspace{-20pt}
	\caption{\textbf{The Two Core Gaps: An Intuitive View and Demonstration.} Details in Section ~\ref{sec:preliminary}.
    }
        \vspace{-15pt}
    \label{fig:gaps}
\end{figure*}

\section{Geometric-aware Calibrated Policy Optimization}
\label{sec:method}

As discussed in the previous section, our preliminary analysis identifies two key limitations of entropy-based uncertainty: 1) \textbf{\emph{the anisotropic gap}}, where entropy captures only coarse probability dispersion while ignoring the geometric magnitude of disagreement, and 2) \textbf{\emph{the calibration gap}}, where entropy is decoupled from reward informativeness and may suppress useful learning signals. 

To address these limitations, we propose \textbf{G}eometric-aware reward-\textbf{C}alibrated \textbf{P}olicy \textbf{O}ptimization \textbf{(GCPO)}, a novel framework that introduces a query-level modulation mechanism that adjusts the strength of policy updates based on both semantic structure and reward signals. Concretely, to overcome the anisotropic gap, we develop a family of geometry-aware uncertainty measures to capture semantic disagreement beyond entropy, i.e., \textbf{\emph{Cosine Dispersion}} (section~\ref{sec:cd}) and \textbf{\emph{Barycentric Transport}} (section~\ref{sec:bot}), and further to overcome the calibration gap, we incorporate \textbf{\emph{Reward Dispersion}} (section~\ref{sec:rd}) to align uncertainty with learning signal strength. Together, these components provide a structured and calibrated mechanism for controlling policy updates in reasoning tasks.

\subsection{Cosine Dispersion: Geometry-aware Disagreement}
\label{sec:cd}

As discussed in Section~\ref{sec:a_gap}, entropy-based measures depend solely on probability mass and therefore fail to capture the \emph{magnitude} of disagreement between responses. To address this anisotropic gap, we first introduce this simple yet effective method, \emph{Cosine Dispersion (CD)}, which measures semantic disagreement directly in embedding space. Given rollout answers $\{a_i\}_{i=1}^G$, we compute normalized embeddings $\mu_i$. The pairwise cosine distance is defined as $D_{ij} = \operatorname{clip}(1-\mu_i^\top \mu_j, 0, 1)$, and with uniform weights $\pi_i = 1/G$, CD is formally defined as:
\begin{equation}
\operatorname{CD}(q)
=
\sum_{i,j} \pi_i \pi_j D_{ij}
=
\pi^\top D \pi.
\end{equation}

Unlike entropy, CD reflects not only whether responses differ, but also \emph{how far apart} they are in semantic space. When responses are aligned, CD remains small; when disagreement becomes geometrically significant, CD increases accordingly. We translate this into a reliability weight:
\begin{equation}
\omega_{\mathrm{CD}}(q)
=
\operatorname{clip}
\left(
1-\alpha_G \operatorname{CD}(q)^2,
0,1
\right).
\end{equation}

This formulation provides a direct geometric correction to entropy-based uncertainty by penalizing large semantic deviations while remaining tolerant to minor variations. In practice, CD is particularly effective when semantic variation is smooth and does not exhibit clear multi-modal structure.

\begin{figure*}[t]
	\centering
	\includegraphics[width=1\linewidth]{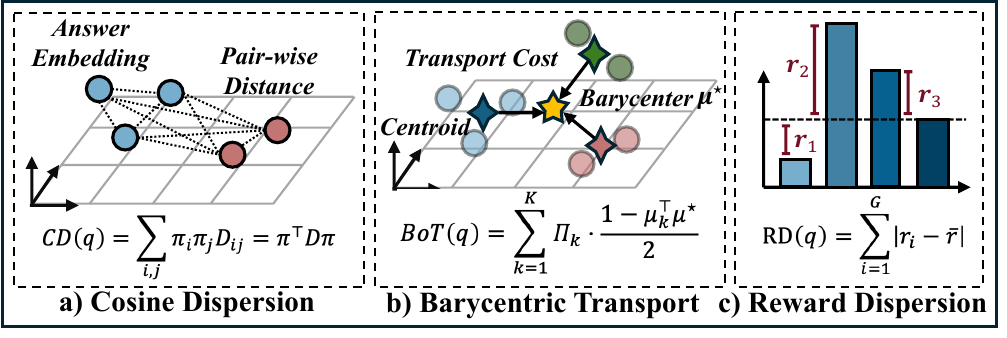}
        \vspace{-20pt}
	\caption{\textbf{Illustration of the key components of GCPO.} The formulas above highlight the core principles rather than the full complete method; complete details are provided in Section~\ref{sec:method}.
    }
        \vspace{-15pt}
    \label{fig:frame}
\end{figure*}

\subsection{Barycentric Transport: Mode-level Semantic Structure}
\label{sec:bot}

While CD captures pairwise geometric disagreement, it treats all samples independently and does not explicitly account for how probability mass is distributed across distinct semantic modes. This limitation becomes pronounced when rollouts form multiple coherent semantic clusters, where within-cluster variation is benign but across-cluster divergence is substantial.

To address this, we draw inspiration from \emph{optimal transport} (OT), which studies how probability mass is redistributed between distributions with minimal cost. In our setting, the rollout group induces an empirical distribution over semantic representations, and uncertainty can be interpreted as the cost of transporting this distribution toward a consensus representation. Formally, given distributions $\{\mu_k\}_{k=1}^K$ with weights $\{\Pi_k\}_{k=1}^K$, the Wasserstein barycenter is defined as
$\mu^\star
=
\arg\min_{\mu}
\sum_{k=1}^{K} \Pi_k \, \mathcal{W}_c(\mu_k, \mu)$,
where $\mathcal{W}_c(\cdot,\cdot)$ denotes the optimal transport cost under ground metric $c(\cdot,\cdot)$, and $\Pi_k$ is the probability mass of cluster $k$. Intuitively, $\mu^\star$ represents a consensus distribution that minimizes the total transport cost from all semantic modes. Computing exact OT distances is computationally expensive and unnecessary in our setting. We therefore adopt a tractable approximation via \emph{Barycentric Transport (BoT)}, which measures the deviation of semantic modes from a shared barycentric direction in embedding space. Specifically, we first group answers into clusters $\{1,\dots,K\}$ using an entailment-based procedure. Let $\Pi_k = \sum_{i:z_i=k} \pi_i$ denote the cluster mass, where $\pi_i = 1/G$ is the rollout weight, and let $\mu_k$ denote the normalized centroid of cluster $k$. We approximate the barycenter direction as
\begin{equation}
\mu^\star = \frac{\sum_k \Pi_k \mu_k}{\left\|\sum_k \Pi_k \mu_k\right\|_2},
\end{equation}
which serves as a first-order surrogate of the OT barycenter in the embedding space. We then define the Barycentric Transport score as

\vspace{-10pt}
\begin{equation}
\operatorname{BoT}(q)
=
\sum_{k=1}^{K}
\Pi_k \cdot \frac{1-\mu_k^\top \mu^\star}{2},
\end{equation}

\vspace{-10pt}

where $\frac{1-\mu_k^\top \mu^\star}{2}$ corresponds to the cosine-based transport cost between cluster $k$ and the barycenter. This quantity can be interpreted as a first-order proxy of the total transport cost required to align all semantic modes with the consensus representation. Unlike CD, BoT collapses intra-cluster variation and focuses on inter-mode structure, making it robust to paraphrastic redundancy while remaining sensitive to true semantic divergence. The corresponding weight is:$\omega_{\mathrm{BoT}}(q)
=
\operatorname{clip}
\left(
1-\alpha_G \operatorname{BoT}(q)^2,
0,1
\right)$.

CD and BoT therefore represent two alternative instantiations of geometry-aware uncertainty. CD captures fine-grained pairwise disagreement and is effective when variation is smooth and continuous, while BoT captures structured dispersion and is more suitable when in distinct semantic clusters. We further discuss their theoretical connections with gradient variance in Appendix~\ref{app:link}. Also, as we discussed above, this connection relies on a mild semantic-gradient alignment assumption; we discuss its implications and empirical validation in Appendix~\ref{app:assumption}.

\subsection{Reward Dispersion: Calibration via Informative Signal}
\label{sec:rd}

While geometry-aware measures address the anisotropic gap, they do not resolve the calibration gap: semantic disagreement alone does not indicate whether a rollout group provides useful learning signals. As discussed in Section~\ref{sec:preliminary}, GRPO updates are driven by reward contrast, suggesting that uncertainty should be calibrated with respect to reward variability. We define \emph{Reward Dispersion (RD)} to quantify reward informativeness. Let $\bar{r} = \frac{1}{G}\sum_i r_i$. Then
\begin{equation}
\operatorname{RD}_{\mathrm{raw}}(q)
=
\sum_{i=1}^{G} |r_i - \bar{r}|,
\quad
\operatorname{RD}(q)
=
\operatorname{clip}
\left(
\frac{\operatorname{RD}_{\mathrm{raw}}(q)}{\operatorname{RD}_{\max}(G)},
0,1
\right),
\quad
\omega_{\mathrm{RD}}(q)
=
1+\alpha_G \operatorname{RD}(q).
\end{equation}

In contrast to geometric terms, RD amplifies updates when reward contrast is strong. This reflects the intuition that rollout groups with higher reward variability provide more informative preference signals, whereas uniformly scored rollouts contribute limited learning signal. In this way, RD provides a mechanism to align uncertainty with optimization dynamics. We discuss the interaction between uncertainty and reward dispersion, and safeguards in Appendix~\ref{app:rd_discussion}.

\vspace{-5pt}
\subsection{Unified Modulation of Policy Updates}

We integrate the above components into a unified modulation of GRPO updates:
$\widetilde{A}_i
=
A_i \cdot \omega_{\mathrm{geo}}(q) \cdot \omega_{\mathrm{RD}}(q)$,
where $\omega_{\mathrm{geo}}$ is instantiated as either $\omega_{\mathrm{CD}}$ or $\omega_{\mathrm{BoT}}$. Rather than a heuristic rescaling, this formulation can be interpreted as a structured control mechanism over policy updates. The geometric term evaluates the \emph{reliability of semantic agreement}, while the reward term evaluates the \emph{informativeness of the rollout group}. Their interaction determines the effective strength of gradient updates, balancing stability and learning efficiency. The additional computation is negligible compared to rollout generation (see Appendix~\ref{app:complexity}). For answer representation, each rollout $y_i$ is mapped to an answer $a_i = \operatorname{Extract}(y_i)$ and embedded as $\mu_i = f_{\mathrm{emb}}(a_i)/\|f_{\mathrm{emb}}(a_i)\|_2$. We adopt a group-size normalization $\alpha_G = \alpha_{\mathrm{base}}/\log G$ to ensure stable scaling. We provide further discussion on the underlying assumptions, approximation properties, optimization effects, and practical considerations of the proposed framework in Appendix~\ref{app:discussion}.

\begin{table*}[t]
\renewcommand{\arraystretch}{1.2}
\centering
\resizebox{\textwidth}{!}{
\begin{tabular}{c ccc ccc cc}
\toprule
\multirow{2}{*}{\textbf{Method}} &
\multicolumn{3}{c}{\textbf{NarrativeQA}} &
\multicolumn{3}{c}{\textbf{Qasper}} &
\multicolumn{2}{c}{\textbf{HotpotQA}} \\
\cmidrule(lr){2-4}\cmidrule(lr){5-7}\cmidrule(lr){8-9}
& \textbf{Acc} & \textbf{BLEU} & \textbf{F1}
& \textbf{Acc} & \textbf{BLEU} & \textbf{F1}
& \textbf{F1} & \textbf{EM} \\
\midrule\midrule

GRPO
& 66.23{\tiny$\pm$1.53} & 48.71{\tiny$\pm$1.70} & 67.02{\tiny$\pm$0.95}
& 11.59{\tiny$\pm$1.63} & 9.40{\tiny$\pm$0.95} & 20.51{\tiny$\pm$1.36}
& 50.00{\tiny$\pm$2.92} & 37.06{\tiny$\pm$2.69} \\

\midrule

w/t SE
& 71.76{\tiny$\pm$0.47} & 52.20{\tiny$\pm$1.80} & 72.24{\tiny$\pm$0.35}
& 13.29{\tiny$\pm$0.40} & 10.80{\tiny$\pm$1.37} & 22.23{\tiny$\pm$0.47}
& 57.33{\tiny$\pm$1.29} & 44.18{\tiny$\pm$1.13} \\

w/t KLE
& 72.03{\tiny$\pm$0.50} & 52.11{\tiny$\pm$1.40} & 72.39{\tiny$\pm$0.30}
& 13.46{\tiny$\pm$0.63} & \textbf{11.62{\tiny$\pm$0.35}} & 22.26{\tiny$\pm$0.30}
& 57.62{\tiny$\pm$0.90} & 44.31{\tiny$\pm$0.83} \\

EMPO
& 57.38{\tiny$\pm$0.25} & 41.32{\tiny$\pm$0.28} & 60.96{\tiny$\pm$0.20}
& 12.00{\tiny$\pm$0.27} & 9.58{\tiny$\pm$0.17} & 18.05{\tiny$\pm$0.23}
& { n/a } & { n/a } \\

CARE
& 71.40{\tiny$\pm$0.60} & 51.84{\tiny$\pm$0.65} & 71.65{\tiny$\pm$0.48}
& 13.32{\tiny$\pm$0.75} & 10.83{\tiny$\pm$0.30} & 22.07{\tiny$\pm$0.46}
& 57.03{\tiny$\pm$1.99} & 43.75{\tiny$\pm$2.14} \\

\midrule

\rowcolor{gray!15}
\textbf{w/t CD}
& \underline{72.59{\tiny$\pm$0.48}} & \underline{53.72{\tiny$\pm$0.89}} & \underline{73.25{\tiny$\pm$0.27}}
& \underline{13.67{\tiny$\pm$0.33}} & \underline{11.28{\tiny$\pm$0.83}} & \underline{24.25{\tiny$\pm$0.17}}
& \underline{59.97{\tiny$\pm$0.67}} & \underline{46.80{\tiny$\pm$0.77}} \\

\textit{\small $\Delta$ vs.GRPO}
& \textcolor{green!50!black}{\small$\uparrow$9.60\%}
& \textcolor{green!50!black}{\small$\uparrow$10.29\%}
& \textcolor{green!50!black}{\small$\uparrow$9.30\%}
& \textcolor{green!50!black}{\small$\uparrow$17.95\%}
& \textcolor{green!50!black}{\small$\uparrow$20.00\%}
& \textcolor{green!50!black}{\small$\uparrow$18.24\%}
& \textcolor{green!50!black}{\small$\uparrow$19.94\%}
& \textcolor{green!50!black}{\small$\uparrow$26.28\%} \\

\rowcolor{gray!15}
\textbf{w/t BoT}
& \textbf{73.77}{\tiny$\pm$0.16} & \textbf{54.85}{\tiny$\pm$0.74} & \textbf{73.82}{\tiny$\pm$0.07}
& \textbf{13.91}{\tiny$\pm$0.63} & 10.94{\tiny$\pm$1.49} & \textbf{24.37}{\tiny$\pm$0.64}
& \textbf{60.78}{\tiny$\pm$1.06} & \textbf{47.27}{\tiny$\pm$1.08} \\

\textit{\small $\Delta$ vs.GRPO}
& \textcolor{green!50!black}{\small$\uparrow$11.38\%}
& \textcolor{green!50!black}{\small$\uparrow$12.61\%}
& \textcolor{green!50!black}{\small$\uparrow$10.15\%}
& \textcolor{green!50!black}{\small$\uparrow$20.02\%}
& \textcolor{green!50!black}{\small$\uparrow$16.38\%}
& \textcolor{green!50!black}{\small$\uparrow$18.82\%}
& \textcolor{green!50!black}{\small$\uparrow$21.56\%}
& \textcolor{green!50!black}{\small$\uparrow$27.55\%} \\

\midrule
\bottomrule
\end{tabular}
}
\vspace{-5pt}
\caption{Performance comparison on NarrativeQA, Qasper, and HotpotQA on Qwen2.5-1.5b-instruct. We report mean and standard deviation, and improvement over raw GRPO across runs.}
\label{tab:main_experiment}
\vspace{-15pt}
\end{table*}

\section{Experiments}
\subsection{Experiment Setup}
\noindent \textbf{Benchmarks. }We evaluate GCPO on a diverse set of benchmarks spanning open-ended QA, fact-driven QA, and mathematical reasoning. We first focus on open-ended question answering datasets, including NarrativeQA \cite{kovcisky2018narrativeqa} and Qasper \cite{dasigi2021dataset}, where multiple semantically distinct yet partially correct responses naturally arise. Such settings induce high response-level variability, making them particularly suitable for studying uncertainty signals and their interaction with optimization dynamics. We report BLEU, token-level F1, and accuracy (F1 > 0.5) for evaluation. We further include fact-driven QA via HotpotQA \cite{yang2018hotpotqa}, which emphasizes multi-hop reasoning with more constrained answer spaces. 
Evaluation follows standard token-level F1 and Exact Match (EM). Finally, We additionally evaluate on mathematical reasoning benchmarks, including MATH500 \cite{hendrycks2021measuring}, AIME24, and OlympiadBench \cite{huang2024olympicarena}, to examine generalization to structured reasoning tasks with deterministic answers. Detailed dataset descriptions and configurations are provided in Appendix~\ref{app:benchmarks}.

\begin{wraptable}{r}{0.50\textwidth}
\renewcommand{\arraystretch}{1.1}
\vspace{-8pt}
\centering
\setlength{\tabcolsep}{1pt}
\footnotesize
\begin{tabular}{c ccc}
\toprule
\textbf{Method} & \textbf{MATH500} & \textbf{AIME24} & \textbf{OlympiadBench} \\
\midrule\midrule

Qwen2.5-Math-1.5B
& 61.8 & \underline{16.7} & 28.4 \\

\midrule

GRPO + SE
& 68.53{\tiny$\pm$0.99}
& 15.57{\tiny$\pm$1.60}
& 36.27{\tiny$\pm$1.39} \\

*\textit{Seed-GRPO}
& \textbf{75.4}
& \textbf{23.3}
& \textbf{41.3} \\

GRPO + KLE
& 67.50{\tiny$\pm$1.41}
& 12.20{\tiny$\pm$1.91}
& 33.33{\tiny$\pm$2.52} \\

EMPO
& 73.0
& 13.3
& 36.6 \\

GRPO-Care
& 68.07{\tiny$\pm$0.46}
& 14.43{\tiny$\pm$1.96}
& 35.78{\tiny$\pm$1.83} \\

\midrule

\rowcolor{gray!15}
\textbf{GRPO + CD}
& \underline{74.67{\tiny$\pm$0.12}}
& \underline{17.80{\tiny$\pm$1.56}}
& \underline{38.24{\tiny$\pm$1.20}} \\

\rowcolor{gray!15}
\textbf{GRPO + BoT}
& 73.80{\tiny$\pm$0.53}
& 15.57{\tiny$\pm$1.60}
& 37.25{\tiny$\pm$0.69} \\

\midrule
\bottomrule
\end{tabular}
\vspace{-8pt}
\caption{Performance on math reasoning benchmarks, where the gain becomes less salient.}
\label{tab:math_benchmarks}
\vspace{-10pt}
\end{wraptable}


\noindent \textbf{Baselines. }
Our objective is not to maximize absolute task performance, but to evaluate whether response-level uncertainty signals meaningfully improve GRPO-style optimization by addressing the limitations identified in Section~\ref{sec:preliminary}. We therefore focus on baselines directly aligned with this scope, including: (i) \textbf{GRPO}, without uncertainty modulation \cite{shao2024deepseekmath}; (ii) \textbf{semantic entropy-based methods}, such as Seed-GRPO \cite{chen2025seed} and EMPO \cite{zhang2025right}, which regulate updates using response-level entropy; (iii) \textbf{kernel-based entropy}, which refines uncertainty via similarity-aware measures \cite{nikitin2024kernel}; and (iv) \textbf{consistency-based methods}, e.g., GRPO-CARE \cite{chen2025grpo}, which use agreement across rollouts as a proxy for uncertainty. To provide broader context, \textbf{\emph{we additionally adapt several most recent baselines}} in Appendix~\ref{app:additional} for a broader comparison, although they are less directly aligned with our primary focus. All methods are evaluated under a unified protocol with the same backbone (Qwen-2.5-1.5B-Instruct) and configuration. Implementation details and adaptation procedures are provided in Appendix~\ref{app:baselines}.

\subsection{Main Experiment}
\label{sec:main_experiment}

We present the main results of GCPO in Table~\ref{tab:main_experiment}. Overall, we can see GCPO consistently outperforms all baselines, demonstrating that uncertainty signals that properly align with gradient variance as we discussed in the analysis, leading to more effective GRPO optimization. \textbf{First, regulating uncertainty is essential and effective for GRPO.} All uncertainty-based methods substantially outperform raw GRPO with 10\% improvement on NarrativeQA and 20\% improvement on Qasper and HotpotQA. This large and consistent gap indicates that one primary limitation of GRPO is the lack of mechanisms to differentiate informative from uninformative rollout groups. Uncertainty signals effectively reweight gradient contributions, reshaping the training dynamics. 
\begin{wraptable}{r}{0.48\textwidth}
\centering
\small
\renewcommand{\arraystretch}{1.05}
\setlength{\tabcolsep}{2pt}
\begin{tabular}{ccccc}
\toprule

\textbf{w/t Success Rate} & \textbf{Entropy} & \textbf{SE} & \textbf{CD} & \textbf{BoT} \\

\midrule\midrule
Math (Rho)
& -0.317 & -0.733 & -0.660 & -0.728 \\

\rowcolor{gray!15}
{\textit{p}}
& {\scriptsize\textit{<0.001}} & {\scriptsize\textit{<0.001}} & {\scriptsize\textit{<0.001}} & {\scriptsize\textit{<0.001}} \\

\midrule
Olympiad (Rho)
& -0.424 & -0.435 & -0.467 & -0.452 \\

\rowcolor{gray!15}
{\textit{p}}
& {\scriptsize\textit{<0.001}} & {\scriptsize\textit{<0.001}} & {\scriptsize\textit{<0.001}} & {\scriptsize\textit{<0.001}} \\

\midrule
\bottomrule
\end{tabular}

\vspace{-5pt}
\caption{Correlation with Success Rate.}
\vspace{-10pt}
\label{tab:success}
\end{wraptable}

\begin{wraptable}{r}{0.48\textwidth}
\centering
\small
\renewcommand{\arraystretch}{1.05}
\setlength{\tabcolsep}{2pt}
\vspace{-6pt}
\begin{tabular}{ccccc}

\toprule

\textbf{w/t Gradient Variance} & \textbf{Entropy} & \textbf{SE} & \textbf{CD} & \textbf{BoT} \\

\midrule\midrule

Math (Rho)
& 0.285 & 0.406 & 0.578 & 0.610 \\
\rowcolor{gray!15}

{\textit{p}}
& {\scriptsize\textit{<0.001}} & {\scriptsize\textit{<0.001}} & {\scriptsize\textit{<0.001}} & {\scriptsize\textit{<0.001}} \\

\midrule
Olympiad (Rho)
& -0.066 & 0.165 & 0.136 & 0.147 \\

\rowcolor{gray!15}
{\textit{p}}
& {\scriptsize\textit{>0.05}} & {\scriptsize\textit{<0.001}} & {\scriptsize\textit{<0.05}} & {\scriptsize\textit{<0.05}} \\

\midrule
\bottomrule

\end{tabular}

\vspace{-6pt}
\caption{Correlation with Gradient Variance.}
\vspace{-10pt}
\label{tab:gradient}

\end{wraptable}

\textbf{Second, performance gains reflect alignment with informative gradient variance.} Improvements are most pronounced on NarrativeQA, where GCPO achieves substantially higher F1 over existing uncertainty-aware methods, while gains on Qasper are smaller.
This mirrors our analysis: NarrativeQA exhibits stronger coupling between response diversity and reward differentiation, making gradient variance more informative, whereas Qasper contains high semantic diversity but weaker reward contrast, echoing our previous analysis that the benefit of improved uncertainty estimation depends on how well it captures \emph{informative} variance rather than dispersion alone. 

\textbf{Third, uncertainty regulation improves training stability through variance control.} The standard deviations across all uncertainty-aware methods compared to raw GRPO are consistently reduced. Moreover, GCPO further improves stability while maintaining performance gains, suggesting that geometry-aware and calibrated signals suppress spurious variance without eliminating informative gradient diversity. \textbf{Overall,} the results suggest that uncertainty should be viewed as a mechanism for \emph{calibrating gradient variance with respect to learning signal quality}, rather than simply boosting or suppressing it.


\begin{wrapfigure}{r}{0.5\textwidth}
\vspace{-10pt}
\centering
\includegraphics[width=\linewidth]{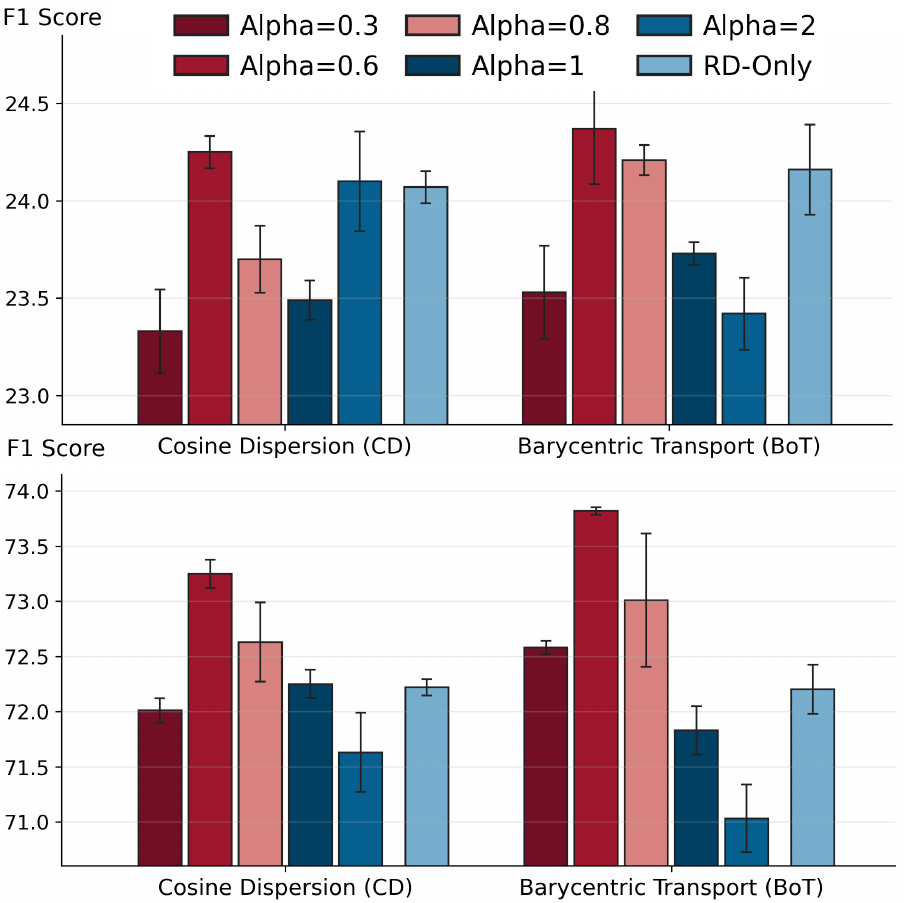}
\vspace{-20pt}
\caption{Effect of $\alpha$ and RD on NarrativeQA.}
\label{fig:ablation}
\vspace{-10pt}
\end{wrapfigure}

\subsection{Experiment on Math Reasoning}
\label{sec:math}
We further evaluate GCPO and baseline methods on mathematical reasoning benchmarks, where we consider the full reasoning trajectories rather than only the final boxed answers. As shown in Table~\ref{tab:math_benchmarks}, the performance gains are less pronounced compared to QA benchmarks. While GCPO consistently outperforms traditional uncertainty-aware methods, it is less competitive with more comprehensive approaches. Notably, \textit{GRPO+SE} implements only the core semantic entropy mechanism from Seed-GRPO instead of a full implementation. To better understand this discrepancy, we conduct statistical analyses. As shown in Tables~\ref{tab:success} and~\ref{tab:gradient}, both benchmarks exhibit strong correlations between uncertainty and success rate. However, the correlation between uncertainty and gradient variance is notably weaker on OlympiadBench compared to MATH datasets. This divergence helps explain why GCPO yields clear improvements on MATH500 but more limited gains on AIME24 and OlympiadBench. Intuitively, mathematical reasoning tasks often admit multiple semantically diverse yet valid solution paths, while reasoning trajectories may remain nearly identical until a single critical error determines correctness. In such settings, response-level uncertainty becomes less aligned with informative gradient variance, limiting its effectiveness as a control signal, showing the importance of matching uncertainty signal design to task characteristics to ensure effective optimization.

\begin{wrapfigure}{r}{0.60\textwidth}

\vspace{-10pt}
\centering
\includegraphics[width=\linewidth]{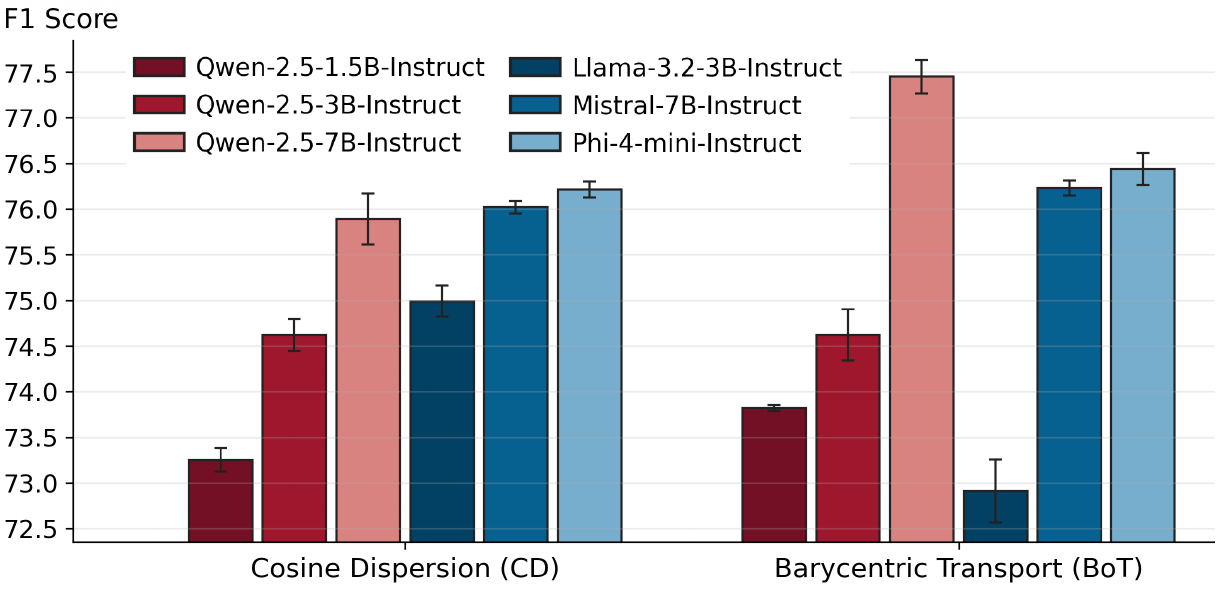}
\vspace{-20pt}
\caption{Effect of $\alpha$ on Qasper.}
\label{fig:size}
\vspace{-10pt}
\end{wrapfigure}
\subsection{Ablation Studies}

We conduct ablations on the scaling factor $\alpha$, reward dispersion (RD), and model capacity. Moderate $\alpha$ values (e.g., $\alpha=0.6$) perform best, while too small or large values degrade performance, indicating that geometry-aware signals are most effective as calibrated modulation. RD-only remains competitive but consistently underperforms CD and BoT, suggesting that reward variation alone is insufficient without structural alignment. Across tasks, RD correlates with gradient informativeness but fails to capture semantic disagreement, whereas CD and BoT provide more consistent gains by modeling response structure. Finally, improvements scale with model size and remain stable across architectures (Figure~\ref{fig:size}), indicating that GCPO is complementary to model scaling. Overall, effective uncertainty-aware optimization requires both calibrated signal strength and alignment with the underlying geometry of the response space.

\vspace{-5pt}
\section{Conclusion}
\vspace{-5pt}

We present GCPO, a principled framework that reinterprets uncertainty as a mechanism for regulating gradient variance in post-training. Through theoretical and empirical analysis, we identify fundamental limitations of entropy-based measures and show that effective uncertainty signals must capture both geometric disagreement and reward-aligned informativeness. By integrating geometry-aware structure with reward calibration, GCPO achieves more reliable alignment with optimization dynamics and consistently improves performance across benchmarks. More broadly, our results suggest that uncertainty should be treated not as a generic proxy for dispersion, but as a structured signal tied to learning dynamics, highlighting the importance of aligning uncertainty estimation with task characteristics and optimization objectives.

\newpage
\bibliography{reference}
\bibliographystyle{unsrtnat}

\newpage
\appendix

\section{Related Work}

\subsection{Uncertainty Estimation for Generation and Reasoning.}
Large language models (LLMs) have advanced rapidly in recent years \cite{ye2025llms4all, xiong2025deliberate, sanchez2022effects, xiong2025enhancing, xiong2026scaling, li2024cheffusion, li2025adaptive, li2025crochetbench}. Building on this progress, LLM-driven agents have gained prominence for their ability to plan, interact, and solve complex tasks with limited human oversight \cite{zhang2026mapro, zhang2025agentrouter, shi2026ng, huang2026evolverouter, huang2026glen, bao2026drift}. A central practical challenge accompanying these successes is hallucination: models confidently producing incorrect or fabricated facts. This leads to the growing research of reliably quantifying and leveraging model uncertainty to faithfully represents model capabilities to recognize and mitigate hallucinations in trustworthy AI. 

A primary line of work estimates uncertainty directly from model predictions, where token-level entropy is used as a proxy for confidence. These methods interpret uncertainty as dispersion over next-token distributions and have been widely used in early studies \cite{fadeeva2023lm, fadeeva2024fact}. More recent approaches incorporate such signals into training, for example by using entropy to modulate gradient magnitude while preserving update direction \cite{hu2026entropy}. While effective for capturing local prediction uncertainty, these approaches remain fundamentally limited: token-level entropy reflects lexical ambiguity, but does not capture higher-level semantic structure across complete responses. As a result, it is often insufficient for reasoning-intensive tasks where correctness depends on global coherence rather than local token uncertainty.

To address this limitation, a growing body of work shifts to \emph{response-level uncertainty}, estimating confidence from agreement across multiple generated samples. Early approaches quantify agreement through exact-match frequency or consistency scores \cite{xiong2023can}, while later work extends this idea to semantic and factual consistency using external evaluators or LLM-based judgments \cite{Manakul2023SelfCheckGPTZB, Zhang2024LUQLU, Jiang2024GraphbasedUM}. Entropy-based formulations provide a more principled probabilistic interpretation by modeling distributions over semantic outcomes, including clustering-based semantic entropy \cite{kuhn2023semantic}, its discrete variant \cite{farquhar2024detecting}, and similarity-based extensions such as kernel entropy \cite{nikitin2024kernel} and nearest-neighbor entropy (Semantic Nearest Neighbor Entropy). 

However, as discussed in Section~\ref{sec:preliminary}, these methods exhibit two key limitations. First, they characterize the distribution of responses but ignore the magnitude and structure of disagreement, treating heterogeneous semantic differences uniformly (the anisotropic gap). Second, they are largely decoupled from reward signals, failing to distinguish between informative diversity and spurious variability (the calibration gap). These limitations motivate our geometry-aware and reward-calibrated formulation.

\subsection{Variance Control in Policy Optimization (Complementary Perspectives).}
Beyond uncertainty estimation, a separate line of work studies stability in policy optimization through the lens of variance control. These approaches operate at different levels of the optimization pipeline and are largely orthogonal to our focus. At the objective level, KL-constrained and divergence-based methods derive stable updates through analytical weighting or regularization \cite{zhang2025gvpo,han2026survey}. Other methods address gradient instability via conflict resolution or reformulation, including probabilistic gradient arbitration \cite{qiang2026plasticity} and contrastive group optimization \cite{zhang2026rence}. At finer granularity, sample-level approaches reweight updates using uncertainty or reward variability \cite{xie2026q}. Additional work targets complementary sources of variance, such as policy-ratio instability \cite{luo2026ratio}, diversity-driven reward shaping \cite{wei2026mmr}, gradient-geometry-based exploration \cite{liang2025can}, or step-level credit assignment \cite{parthasarathi2025grpo}. 

\textbf{It is important to notice that while these methods provide principled mechanisms for improving optimization stability, and both can be phrased as \emph{uncertainty-aware}, they focus on fundamentally different aspects of policy optimization.} These methods focus on policy-level or local variance sources and do not explicitly model how semantic structure and reward differentiation jointly influence gradient variance at the response level. In contrast, our approach targets this intermediate level by treating uncertainty as a structured signal that aligns semantic disagreement with reward informativeness to calibrate gradient contributions across rollout groups.

\section{Proof: Relations Between Entropy and Variance}
\label{app:math_proofs}

In Section \ref{sec:a_gap}, we established that minimizing entropy can act as a proxy for controlling Gradient Variance. Here, we provide the detailed mathematical derivations supporting Proposition \ref{prop:a_gap}. Let $\Pi = \{\pi_1, \dots, \pi_K\}$ be a probability distribution over $K$ tokens such that $\sum_{k=1}^K \pi_k = 1$ and $\pi_k \ge 0$.

\subsection{Derivation of the Variance Upper Bound}
\label{app:variance_bound_derivation}
First, we derive how the pairwise variance decomposition relates to the Gini Impurity under a bounded distance assumption.

Recall that the total gradient variance can be decomposed via the law of total variance \cite{casella2024statistical}. In practice, we focus on the component induced by differences across tokens, which can be approximated by the pairwise form in Eq.~\ref{eq:pairwise_var}:
\begin{equation}
    \mathcal{V}(q) \approx \frac{1}{2} \sum_{i=1}^K \sum_{j=1}^K \pi_i \pi_j \|\mu_i - \mu_j\|^2
\end{equation}

\textbf{Assumption:} Let $\Delta_{\max}$ be the maximum possible gradient disagreement between any two tokens, such that $\|\mu_i - \mu_j\|^2 \le \Delta_{\max}^2$ for all $i \neq j$. Note that if $i=j$, $\|\mu_i - \mu_j\|^2 = 0$.

Substituting this inequality into the variance expression:
\begin{align}
    \mathcal{V}(q) &\le \frac{1}{2} \sum_{i \neq j} \pi_i \pi_j \Delta_{\max}^2 \\
    &= \frac{\Delta_{\max}^2}{2} \underbrace{\sum_{i \neq j} \pi_i \pi_j}_{\text{Sum of off-diagonal probs}}
\end{align}

Using the identity:
$$ (\sum_{i=1}^K \pi_i)^2 = \sum_{i=1}^K \pi_i^2 + \sum_{i \neq j} \pi_i \pi_j = 1, $$
we obtain:
$$ \sum_{i \neq j} \pi_i \pi_j = 1 - \sum_{i=1}^K \pi_i^2. $$

Substituting back:
\begin{equation}
    \mathcal{V}(q) \le \frac{\Delta_{\max}^2}{2} \left( 1 - \sum_{k=1}^K \pi_k^2 \right) = \frac{\Delta_{\max}^2}{2} G(\Pi),
\end{equation}
where $G(\Pi)$ is the Gini Impurity. This shows that the gradient variance is upper-bounded by Gini Impurity scaled by the worst-case geometric disagreement.

\subsection{Proof of Strict Monotonicity for Binary Cases ($K=2$)}
\textbf{Claim:} When $K=2$, there is a strict one-to-one monotonic mapping between Shannon Entropy and Gini Impurity.

\begin{proof}
Consider $\Pi = \{\pi, 1-\pi\}$ with $\pi \in [0, 0.5]$:
\begin{align}
    H(\pi) &= -\pi \ln \pi - (1-\pi) \ln (1-\pi), \\
    G(\pi) &= 1 - (\pi^2 + (1-\pi)^2) = 2\pi - 2\pi^2.
\end{align}

$$ \frac{dG}{d\pi} = 2 - 4\pi > 0, \quad
\frac{dH}{d\pi} = \ln\left(\frac{1-\pi}{\pi}\right) > 0 \quad \text{for } \pi < 0.5. $$

Thus both are strictly increasing, establishing a bijection.
\end{proof}

\subsection{Proof of the Upper Bound ($G \le H$)}
\textbf{Claim:} For any $K \ge 2$, Gini Impurity is upper-bounded by Shannon Entropy (for non-degenerate distributions).

\begin{proof}
Using $x(1-x) \le -x \ln x$ for $x \in [0,1]$:
$$ \sum_{k=1}^K \pi_k(1-\pi_k) \le \sum_{k=1}^K -\pi_k \ln \pi_k, $$
which yields
$$ G(\Pi) = 1 - \sum_{k=1}^K \pi_k^2 \le H(\Pi). $$

Combining with the previous result:
$$ \mathcal{V}(q) \le \frac{\Delta_{\max}^2}{2} G(\Pi) \le \frac{\Delta_{\max}^2}{2} H(\Pi). $$

Thus, minimizing token-level entropy reduces an upper bound on this component of gradient variance.
\end{proof}

\subsection{Detailed Analysis of the Anisotropic Gap}
\label{app:anisotropic_gap_analysis}

In Section \ref{sec:a_gap}, we argued that while entropy provides a valid upper bound, it may become a loose proxy when gradient disagreements are anisotropic. We quantify this via the \textit{bound slack}:
\begin{equation}
    \mathcal{S}(\Pi) = \frac{\Delta_{\max}^2}{2} G(\Pi) - \frac{1}{2} \sum_{i \neq j} \pi_i \pi_j \|\mu_i - \mu_j\|^2.
\end{equation}

Rewriting:
\begin{align}
    \mathcal{S}(\Pi)
    =
    \frac{1}{2} \sum_{i \neq j}
    \pi_i \pi_j
    \left( \Delta_{\max}^2 - \|\mu_i - \mu_j\|^2 \right).
\end{align}

\paragraph{Regime 1: Tight Bound.}
If $\|\mu_i - \mu_j\|^2 \approx \Delta_{\max}^2$, then $\mathcal{S}(\Pi) \approx 0$. The bound is tight.

\paragraph{Regime 2: Loose Bound.}
If $\|\mu_i - \mu_j\|^2 \approx 0$, then
$$ \mathcal{S}(\Pi) \approx \frac{\Delta_{\max}^2}{2} G(\Pi). $$
The bound becomes maximally loose.

\paragraph{Conclusion.}
Entropy minimizes an upper bound that conflates true variance and slack. In anisotropic regimes, optimization may focus on reducing slack rather than true instability, motivating geometry-aware alternatives.

\section{Link between Geometry-aware Uncertainty and Gradient Variance.}
\label{app:link}
While entropy controls only the probability dispersion over response modes, CD and BoT additionally incorporate the geometric magnitude of disagreement. This provides a more direct surrogate for the pairwise component of gradient variance. To formalize this intuition, assume that the expected score-function gradient associated with an answer representation is locally Lipschitz in the semantic embedding space, i.e.,
\[
\|\mu_i^g-\mu_j^g\|^2 \le L_g^2 d(e_i,e_j),
\]
where $\mu_i^g=\mathbb{E}[\nabla_\theta \log \pi_\theta(y_i|x)]$ denotes the expected gradient direction induced by response $i$, $e_i$ is its normalized semantic embedding, and $d(e_i,e_j)=1-e_i^\top e_j$ is the cosine distance. Then the inter-response component of gradient variance satisfies
\[
V_{\mathrm{inter}}(x)
\;\lesssim\;
\frac{L_g^2}{2}
\sum_{i,j}\pi_i\pi_j d(e_i,e_j)
=
\frac{L_g^2}{2}\operatorname{CD}(x).
\]
This shows that, under a semantic-gradient smoothness assumption, CD upper-bounds the geometry-sensitive component of gradient variance up to a task-dependent constant. Unlike entropy, which replaces all pairwise disagreements by a worst-case constant, CD preserves the heterogeneous distances among responses and therefore yields a potentially tighter surrogate when disagreement is anisotropic.

Similarly, when responses form semantic clusters, BoT approximates the dispersion of cluster centroids around a barycentric consensus. Under the same smoothness assumption at the cluster level,
\[
V_{\mathrm{inter}}^{\mathrm{cluster}}(x)
\;\lesssim\;
L_g^2
\sum_k \Pi_k d(\mu_k,\mu^\star)
=
L_g^2 \operatorname{BoT}(x),
\]
indicating that BoT controls the mode-level component of gradient variance while being less sensitive to benign within-cluster paraphrastic variation.

Note that these bounds should be interpreted as structural justification rather than formal optimization guarantees: they rely on the assumption that semantic embedding distances are locally aligned with induced gradient directions. Our empirical analysis in Appendix~\ref{app:stats} validates this assumption statistically by showing stronger alignment between CD/BoT and sample-level gradient variance.

\section{Statistical Analysis of Uncertainty Measures}
\label{app:stats}

We conduct statistical experiments to examine whether different uncertainty measures align with the optimization-relevant quantity in GRPO, namely \emph{sample-level gradient variance}. Our objective is not to evaluate downstream accuracy, but to understand whether uncertainty reflects the variability of policy updates induced by rollout diversity.

\subsection{Data and Experimental Setup}

We randomly sample 1,000 queries from each of NarrativeQA and Qasper. For each query $q$, we generate a rollout group $\mathcal{Y}(q)=\{y_i\}_{i=1}^G$ from the policy $\pi_\theta(\cdot \mid q)$. Each rollout produces a score-function gradient
\[
g_i := \nabla_\theta \log \pi_\theta(y_i \mid q),
\]
and receives a scalar reward $r_i$. Following GRPO, normalized advantages $\hat{A}_i$ are computed within the group. To avoid the analysis being dominated by extreme cases, we remove the top 20 samples with the largest gradient variance and focus on the remaining dataset.

\subsection{Target Quantity: Sample-level Gradient Variance}

We define the \textbf{sample-level gradient variance} for a query $q$ as
\begin{equation}
\mathcal{V}(q)
:=
\frac{1}{G}
\sum_{i=1}^{G}
\left\|
\hat{A}_i g_i
-
\frac{1}{G}\sum_{j=1}^{G} \hat{A}_j g_j
\right\|_2^2.
\end{equation}

This quantity measures how much the update direction varies across sampled responses. When rollouts correspond to similar reasoning paths, gradients are aligned and $\mathcal{V}(q)$ is small. When rollouts reflect distinct reasoning strategies, gradients diverge and $\mathcal{V}(q)$ becomes large. Thus, $\mathcal{V}(q)$ captures both \emph{optimization instability} and \emph{learning signal diversity}.

\subsection{Uncertainty Measures}

For each query, we compute four uncertainty measures:
\begin{itemize}
    \item \textbf{Token Entropy:} uncertainty over token distributions;
    \item \textbf{Semantic Entropy (SE):} entropy over clustered semantic outputs;
    \item \textbf{Cosine Dispersion (CD):} pairwise geometric disagreement;
    \item \textbf{Barycentric Transport (BoT):} transport-based dispersion over semantic modes.
\end{itemize}
Each produces a scalar score $u(q)$.

\subsection{Association with Gradient Variance}

We first evaluate whether uncertainty measures are monotonically aligned with gradient variance using the \textbf{Spearman rank correlation coefficient}. For a metric $u(q)$, let $r_u(q)$ denote its rank. Spearman correlation is defined as
\begin{equation}
\rho(u,\mathcal{V})
=
1
-
\frac{6 \sum_q (r_u(q)-r_{\mathcal{V}}(q))^2}{N(N^2-1)}.
\end{equation}

This metric captures ranking consistency rather than linear dependence. A higher $\rho$ indicates that the uncertainty measure orders samples similarly to gradient variance.

All four uncertainty measures exhibit statistically significant correlation with $\mathcal{V}(q)$ (all $p \ll 0.05$), confirming that uncertainty is generally related to gradient variability. However, the magnitude of correlation differs substantially: geometry-aware measures (CD and BoT) achieve higher $\rho$ than entropy-based measures, indicating stronger alignment with optimization-relevant variation.

\subsection{Identification of High-Variance Samples}

We next evaluate whether uncertainty measures can identify the most optimization-relevant samples. Let $\mathcal{H}$ denote the top $10\%$ of samples ranked by $\mathcal{V}(q)$.

We evaluate two metrics:

\paragraph{AUC.}
We treat $u(q)$ as a scoring function for classifying $q \in \mathcal{H}$. The AUC is
\begin{equation}
\mathrm{AUC}
=
\mathbb{P}\big(u(q^+) > u(q^-)\big),
\end{equation}
where $q^+ \in \mathcal{H}$ and $q^- \notin \mathcal{H}$.

\paragraph{Top-$10\%$ Precision.}
\begin{equation}
\mathrm{Precision}@10\%
=
\frac{|\{q \in \mathcal{H} : q \in \text{Top-}10\%(u)\}|}{|\text{Top-}10\%(u)|}.
\end{equation}

These metrics are primarily \emph{descriptive}: they quantify how well each uncertainty measure ranks and retrieves high-variance samples. Unlike correlation, they are not inherently associated with a single null hypothesis.

Nevertheless, paired bootstrap analysis (described below) confirms that BoT achieves a statistically reliable improvement in AUC over CD, indicating that transport-based uncertainty better identifies the most unstable samples.

\subsection{Statistical Robustness via Paired Bootstrap}

To evaluate whether differences between methods are statistically stable, we perform \textbf{paired bootstrap resampling}. For each replicate, we resample queries with replacement and compute
\[
\Delta^{(b)} = \rho(\mathrm{BoT}, \mathcal{V}) - \rho(u, \mathcal{V}),
\]
for each baseline $u$.

All confidence intervals for $\Delta$ lie strictly above zero, indicating that BoT consistently outperforms entropy, SE, and CD in terms of correlation with gradient variance. This confirms that the observed advantage is not driven by a small subset of samples but persists under resampling.

\subsection{Held-out Gradient Prediction}

Finally, we evaluate whether uncertainty measures generalize to unseen data. For each metric $u(q)$, we fit a regression model
\[
\hat{\mathcal{V}}(q) = \beta_0 + \beta_1 u(q)
\]
and evaluate using cross-validation.

We report mean absolute error (MAE) and Spearman correlation on held-out folds. These metrics serve as \emph{descriptive predictive diagnostics}. Since we do not perform paired significance tests across folds, we do not interpret differences as statistically significant. Instead, we use them to assess whether the relationship between uncertainty and gradient variance generalizes beyond the observed samples.

\subsection{Discussion: Dataset-dependent Effects}

We observe that the alignment between uncertainty and gradient variance is weaker on Qasper than on NarrativeQA (Figure~\ref{fig:prelim}). One plausible explanation is that Qasper are more challenging, which exhibits higher semantic diversity with less reward differentiation: many rollouts are semantically distinct but receive similarly low rewards. In this regime, semantic dispersion does not translate as directly into gradient variability. In contrast, NarrativeQA often involves concise answer uncertainty, where different rollouts correspond to competing semantic hypotheses with clearer reward differences. This leads to a stronger coupling between semantic structure and gradient variance. As a result, while geometry-aware uncertainty remains beneficial on Qasper, its effect is attenuated compared to NarrativeQA. This observation is consistent with the smaller but still positive gains observed in the main optimization experiments.

\subsection{Final Takeaway}

Across both NarrativeQA and Qasper, geometry-aware uncertainty measures generally align more closely with gradient variance than entropy-based measures, although the strength of this relationship varies across datasets. This advantage is more pronounced on NarrativeQA, while on Qasper the relative improvements are smaller and less consistently significant across all metrics.

More broadly, the results support the theoretical claim that uncertainty in reasoning tasks is inherently \emph{anisotropic} and \emph{reward-coupled}: effective uncertainty measures should reflect the geometric diversity of reasoning paths and their interaction with learning dynamics, rather than relying solely on probability dispersion.

\section{Implementation Details}
\label{app:implementation}

\subsection{Benchmarks}
\label{app:benchmarks}

\noindent\textbf{NarrativeQA}~\citep{kovcisky2018narrativeqa} is an open-ended question answering benchmark constructed from books and movie scripts. Unlike extractive QA datasets, answers are not directly grounded in short spans, requiring models to synthesize information from long narratives. The open-ended nature of responses leads to multiple semantically valid answers, making it well-suited for studying response-level variability and uncertainty.

\noindent\textbf{Qasper}~\citep{dasigi2021dataset} is a question answering dataset over scientific articles, where questions often require synthesizing evidence from multiple sections of a paper. It features diverse answer types, including free-form generation and extractive spans, and introduces substantial ambiguity due to incomplete or partially supported evidence. This makes Qasper a challenging benchmark for evaluating reasoning under uncertainty.

\noindent\textbf{HotpotQA}~\citep{yang2018hotpotqa} is a multi-hop QA benchmark designed to evaluate reasoning across multiple documents. Questions require combining information from different supporting passages, with annotated evidence for explainability. Compared to open-ended QA, its answer space is more constrained, providing a complementary setting for studying uncertainty in structured reasoning tasks.

\noindent\textbf{MATH500}~\citep{hendrycks2021measuring} is a subset of the MATH dataset focused on mathematical problem solving. It contains competition-level problems spanning algebra, geometry, and number theory, where answers are deterministic and correctness depends on precise reasoning. This benchmark tests whether uncertainty signals remain informative under structured and rule-based reasoning.

\noindent\textbf{AIME24} is a collection of recent American Invitational Mathematics Examination problems, widely used for evaluating mathematical reasoning in LLMs. Problems typically admit a single numerical answer but require multi-step derivations, making them sensitive to small reasoning errors despite low observable response diversity. In this paper, we use the model trained on OlympiadBench to test AIME24.

\noindent\textbf{OlympiadBench}~\citep{huang2024olympicarena} consists of challenging competition-level problems designed to evaluate advanced reasoning ability. Compared to MATH500 and AIME24, it features higher complexity and often admits multiple valid solution strategies, introducing semantic diversity even under deterministic answer formats.

\subsection{Baselines}
\label{app:baselines}

\noindent\textbf{GRPO}~\citep{shao2024deepseekmath} serves as the base policy optimization framework without uncertainty modulation. It performs group-based updates by normalizing rewards across sampled responses, implicitly assuming all rollouts contribute equally regardless of their variability or informativeness.

\noindent\textbf{Semantic Entropy (SE). }Semantic entropy-based methods, including Seed-GRPO~\citep{chen2025seed} and EMPO~\citep{zhang2025right}, estimate uncertainty by clustering responses into semantic groups and measuring the entropy over cluster distributions. High entropy indicates diverse reasoning paths, which is used to modulate optimization updates. However, such methods primarily capture inter-response dispersion and may overlook structural differences within clusters.

\noindent\textbf{Kernel-based Entropy (KLE). }Kernel-based entropy~\citep{nikitin2024kernel} refines uncertainty estimation by incorporating similarity between responses via kernel functions. Instead of discrete clustering, it computes a continuous entropy measure over pairwise similarities, enabling smoother estimation of semantic diversity.

\noindent\textbf{Consistency-based Methods. }GRPO-CARE~\citep{chen2025grpo} measures uncertainty through agreement across sampled responses. The core intuition is that consistent outputs indicate high confidence, while disagreement reflects uncertainty. This approach avoids explicit entropy estimation but relies on pairwise agreement as a proxy signal.

\subsection{Training Details}
\label{appendix:Training}
\noindent\textbf{Split and Configuration. } We use designed split for training and testing. AIME does't have training set, we use the model trained on OlympiadBench to test AIME24. Unless otherwise specified, we train the models with 2000 steps, with a learning rate of 5e-5, and beta 0.002, with 16 rollouts and a temperature of 0.9. The alpha is set to 0.6 for default, with more ablations to see in Figure~\ref{fig:ablation}. By default, we train all models on Qwen 2.5-1.5b-instruct. And for math reasoning, we use the full solution instead of the final answer to calculate embeddings, and then uncertainty. Finally, we use PSC Bridge HPC and do the all the training with single H100-GPU. For more configure problems, please refer to our supplement materials.

\noindent\textbf{Semantic Clustering and Embedding Details.}
For CD and BoT, we use \texttt{all-MiniLM-L6-v2} as the embedding function $f_{\mathrm{emb}}$, implemented with \texttt{SentenceTransformer}. CD does not use entailment-based clustering. Given a group of $G$ sampled responses, we embed each response, $\ell_2$-normalize the embeddings, construct the cosine-distance matrix $D_{ij}=1-\cos(e_i,e_j)$, clip distances to $[0,1]$, and compute as discussed in methodology. BoT uses the same embedding model but first aggregates responses into semantic clusters using NLI. The entailment model is \texttt{roberta-large-mnli}. The default entailment threshold is $0.35$. Clustering is greedy and order-dependent: the first response initializes the first cluster; each subsequent response $h_i$ is compared against the current cluster representatives, where each representative is the first response assigned to that cluster. For cluster $k$, the NLI input is
$(\mathrm{premise}=\mathrm{rep}_k,\ \mathrm{hypothesis}=h_i)$. The response is assigned to the cluster with the largest entailment probability if that probability is at least $0.35$; otherwise a new cluster is created. If the NLI model exposes a label named \texttt{entailment}, we use that label index; otherwise we use the standard MNLI entailment index 2. After clustering, BoT computes a weighted centroid for each semantic cluster from
the response embeddings. Let $\mu_k$ denote the normalized centroid of cluster $k$ and $\pi_k$ its empirical mass. We compute as discussed in methodology with the final value clipped to $[0,1]$. The text passed to CD/BoT is task-dependent. For QA-style tasks
(\texttt{f1}, \texttt{f1\_bleu}, and \texttt{f1\_em}), we use the extracted free-form answer text. For math/exact-match tasks, we use the full solution (\texttt{use\_full\_completion\_for\_clustering=true}). In
our MATH constrained CD/BoT runs, this flag is enabled, so the full rollout text is embedded and clustered.

\noindent\textbf{Reward definitions and RD normalization.}
Reward dispersion is computed per prompt group from the realized rollout rewards. For a configured reward range $[r_{\min},r_{\max}]$, the maximum possible value is
\[
\mathrm{RD}_{\max}(G)
=
\frac{2}{G}
\left\lfloor \frac{G}{2} \right\rfloor
\left\lceil \frac{G}{2} \right\rceil
(r_{\max}-r_{\min}).
\]
We then normalize as
\[
\mathrm{RD}_{\mathrm{norm}}(q)
=
\mathrm{clip}
\left(
\frac{\mathrm{RD}_{\mathrm{raw}}(q)}
{\mathrm{RD}_{\max}(G)},
0,1
\right).
\]
In the constrained CD/BoT experiments, we set $r_{\min}=0$ and $r_{\max}=2$ in the configuration. Thus, for the common group size $G=16$.

\begin{table*}[t]
\centering
\begin{tabular}{lll}
\toprule
Benchmark & Metric / extraction & Reward range for RD \\
\midrule
Qasper &
$2.0 \times$ token-F1 over references &
$[0,2]$ \\
NarrativeQA &
$2.0 \times$ token-F1 over references &
$[0,2]$ \\
HotpotQA &
$2.0 \times$ QA F1/EM-style score &
$[0,2]$ \\
MATH &
$2.0$ if extracted answer matches, else $0$ &
$[0,2]$ \\
AIME24 &
$2.0$ if extracted answer matches, else $0$ &
$[0,2]$ \\
OlympiadBench &
$2.0$ if extracted answer matches, else $0$ &
$[0,2]$ \\
\bottomrule
\end{tabular}
\caption{Per-task reward definitions used for reward-dispersion normalization.}
\label{tab:reward-ranges}
\end{table*}

For QA benchmarks, the reward is continuous because it is proportional to token F1 over the reference answers. For math-style benchmarks, the correctness reward is binary after answer extraction and normalization. In all constrained CD/BoT runs, RD normalization uses the configured correctness range $[0,2]$.

\section{Discussions} 
\label{app:discussion}

\subsection{Trade-off between Uncertainty and Reward Dispersion}
\label{app:rd_discussion}

A key design challenge is balancing the suppression of spurious variance with the preservation of informative learning signals. Pure uncertainty minimization tends to collapse response diversity, suppressing high-variance samples that often carry strong gradient signals, as discussed in the calibration gap. In contrast, relying solely on reward dispersion encourages large reward variability, which may arise from ambiguous or adversarial prompts and can destabilize optimization.

To address this tension, we introduce a complementary modulation scheme. Geometry-aware uncertainty is defined as$w_{\mathrm{UQ}}(q) = 1 - \alpha_G \mathrm{UQ}(q)^2$,
while reward dispersion is scaled as
$w_{\mathrm{RD}}(q) = 1 + \alpha_G \mathrm{RD}(q)$,
where $\mathrm{RD}(q)$ is the normalized reward dispersion: $\mathrm{RD}(q) = \mathrm{clip}\!\left(
\frac{\sum_i |r_i - \bar{r}|}{\mathrm{RD}_{\max}(G)}, 0, 1
\right)$. This formulation induces a calibrated regime where moderate variability is amplified, while extreme dispersion is suppressed. Intuitively, geometry-aware uncertainty acts as a reliability constraint that penalizes semantically inconsistent or noisy rollout groups, whereas reward dispersion serves as an informativeness signal that promotes updates when reward differentiation is meaningful.

A potential concern is that reward dispersion may amplify groups with noisy or mis-specified rewards. This is mitigated by the interaction between the two components. While RD alone would favor high-variance groups, the uncertainty term down-weights cases where semantic structure is inconsistent, preventing amplification of spurious variability. In addition, RD is bounded within $[0,1]$, yielding $w_{\mathrm{RD}}(q) \in [1, 1+\alpha_G]$, which ensures that gradient scaling remains controlled. Furthermore, RD captures relative reward differences within a group, so uniformly noisy or biased rewards tend to cancel out after normalization.

The normalization and scaling are designed to ensure stable behavior across different rollout configurations. Normalizing by $\mathrm{RD}_{\max}(G)$ makes RD invariant to reward scale, while the group-dependent factor $\alpha_G = \alpha / \log G$ compensates for the natural increase of dispersion with group size. From an optimization perspective, the combined modulation
\[
\tilde{A}_i = A_i \cdot w_{\mathrm{UQ}}(q) \cdot w_{\mathrm{RD}}(q)
\]
ensures that the gradient estimator remains well-conditioned, as both amplification and suppression are bounded.

Overall, this design establishes a structured trade-off: updates are strengthened when responses exhibit aligned semantic structure with informative reward contrast, and attenuated when variance is dominated by noise. Rather than uniformly suppressing or amplifying gradient variance, GCPO selectively calibrates it according to both semantic geometry and reward informativeness, leading to more stable and effective policy optimization. Empirically, we observe that when semantic structure and reward variation are well aligned, this interaction yields consistent gains, while in settings where such alignment weakens (e.g., OlympiadBench), the effect naturally diminishes.

\subsection{Relation to KL-constrained Optimization}
Recent work such as GVPO~\cite{zhang2025gvpo} derives policy update weights from the analytical solution of KL-constrained reward maximization, providing a principled objective-level formulation with strong theoretical guarantees. Specifically, GVPO directly incorporates the closed-form relationship between rewards and optimal policies under a KL constraint into its gradient weights, ensuring alignment with the KL-constrained optimum and offering a variance–covariance interpretation of the update. 

In contrast, our approach addresses a complementary and orthogonal problem: how to \emph{selectively modulate gradient contributions across samples} given a fixed optimization objective. While KL regularization (as used in GRPO and methods such as GVPO) controls the \emph{magnitude and stability of policy updates at the distribution level}, it does not differentiate between informative and uninformative rollout groups within a batch. In this paper, as analyzed in Section~\ref{sec:preliminary}, we focus on the core challenge in GRPO-style training of the heterogeneity of gradient variance across queries, where some samples provide rich learning signals while others introduce spurious noise. 

GCPO operates precisely at this level by introducing uncertainty-aware weights that calibrate gradient contributions based on both semantic structure and reward informativeness. In this sense, KL-based methods and GCPO address different axes of the optimization problem: KL constraints ensure global policy stability, whereas our geometry-aware uncertainty measures provide \emph{fine-grained, sample-level control} over gradient variance. These mechanisms are therefore complementary rather than competing.

We do not include KL-constrained optimization methods as direct baselines, as they fundamentally alter the optimization objective and gradient formulation. In contrast, our goal is to study uncertainty as a \emph{plug-in signal} within the GRPO framework, isolating its role in shaping optimization dynamics. Our contribution is therefore not to propose the most optimal RL objective, but to provide a focused analysis of how uncertainty interacts with gradient variance and learning signals. Through this lens, we identify key limitations of existing uncertainty measures and derive practical design principles for aligning uncertainty estimation with effective optimization.

\subsection{Bias induced by uncertainty weighting.}
GCPO modifies the standard GRPO estimator through query-level reweighting, and therefore does not preserve the unbiasedness of the original gradient estimator. Recall that GRPO estimates
\[
\hat{g}(x) = \frac{1}{G} \sum_{i=1}^{G} \hat{A}_i \nabla_\theta \log \pi_\theta(y_i \mid x).
\]
GCPO introduces a query-level weight $w(q) = \omega_{\mathrm{geo}}(q)\,\omega_{\mathrm{RD}}(q)$, yielding
\[
\tilde{g}(x) = w(q)\,\hat{g}(x), \quad 
\mathbb{E}[\tilde{g}(x)] = \mathbb{E}[w(q)\,\hat{g}(x)] \neq \mathbb{E}[\hat{g}(x)].
\]
Thus, GCPO optimizes a reweighted objective where each query contributes proportionally to its estimated reliability and informativeness.

This reweighting induces a structured bias but also directly affects gradient variance:
\[
\mathrm{Var}(\tilde{g}(x)) = \mathrm{Var}(w(q)\,\hat{g}(x))
\approx \mathbb{E}[w(q)^2 \mathrm{Var}(\hat{g}(x)\mid q)] 
+ \mathrm{Var}(w(q))\,\|\mathbb{E}[\hat{g}(x)\mid q]\|^2.
\]
Since $\omega_{\mathrm{geo}}(q)$ suppresses semantically inconsistent (high-variance) groups and $\omega_{\mathrm{RD}}(q)$ amplifies reward-informative ones, GCPO effectively reduces variance from noisy directions while preserving informative gradient components.

This behavior is supported empirically. As shown in Section~\ref{sec:main_experiment}, uncertainty-aware methods consistently reduce training variance compared to raw GRPO, and GCPO further improves stability while maintaining performance gains. Geometry-aware measures also exhibit stronger alignment with sample-level gradient variance, leading to lower standard deviations across runs. 

Therefore, GCPO should be interpreted as a calibrated estimator that trades unbiasedness for improved variance control and learning signal quality. Rather than strictly optimizing the original GRPO objective, it reshapes the effective training distribution toward informative rollout groups. A formal analysis of convergence under such adaptive weighting remains an important direction for future work.

\subsection{On the Semantic--Gradient Alignment Assumption}
\label{app:assumption}

Our theoretical formulation relies on a structural assumption that semantic distances in the embedding space are locally aligned with differences in induced gradient directions. Formally, this is captured by the Lipschitz-type condition used in Appendix~\ref{app:link}, which relates embedding distance to gradient discrepancy.

We emphasize that this assumption is not intended as a strict guarantee, but as a modeling hypothesis that enables a tractable connection between response geometry and optimization dynamics. Importantly, our framework does not depend on this assumption holding universally; rather, it requires that semantic structure provides a better proxy for gradient variability than purely distributional measures such as entropy.

Empirically, our results support this premise. As shown in Figure~\ref{fig:prelim} and Appendix~\ref{app:stats}, geometry-aware measures (CD, BoT) consistently exhibit stronger alignment with sample-level gradient variance across QA benchmarks. At the same time, we observe that when this alignment weakens, as in OlympiadBench (Table~\ref{tab:gradient}), the performance gains of GCPO become less pronounced. This behavior is consistent with the theoretical expectation and highlights that the effectiveness of geometry-aware uncertainty depends on task-specific alignment between semantic variation and optimization signals.

Overall, this assumption should be viewed as a domain-dependent inductive bias rather than a universal property. Our results suggest that when semantic embeddings capture optimization-relevant variation, geometry-aware uncertainty provides a more faithful surrogate for gradient variance than entropy-based measures.

\subsection{Computational Overhead Analysis}
\label{app:complexity}

We analyze the computational overhead introduced by GCPO relative to standard GRPO. The dominant cost in GRPO-style training arises from rollout generation, which requires multiple forward passes of the language model. In comparison, the additional operations introduced by GCPO are lightweight and scale only with the rollout group size.

Specifically, given a group of size $G$, computing geometry-aware measures involves: (i) embedding extraction $O(Gd)$, (ii) pairwise cosine distance computation $O(G^2)$ for CD, and (iii) cluster-level aggregation for BoT. Reward dispersion requires only $O(G)$ operations. In practice, $G$ is small (e.g., $G \leq 16$), making these computations negligible relative to model inference.

Empirically, we observe no noticeable increase in wall-clock training time compared to standard GRPO under identical rollout configurations. This suggests that GCPO provides improved optimization signals with minimal additional computational overhead.

\subsection{On the OT Interpretation of BoT.}
We hope to emphasize that Barycentric Transport (BoT) is not intended as an exact optimal transport (OT) computation. Rather, it is a lightweight, geometry-aware surrogate inspired by the OT perspective of aligning distributions toward a consensus representation. Specifically, we approximate the transport structure via cosine-based deviations from a barycentric direction, which avoids the computational overhead of solving OT problems while preserving sensitivity to mode-level semantic dispersion.

Our goal is not to approximate OT distances faithfully, but to construct a tractable measure that better aligns with optimization-relevant gradient variance. Under the same semantic-gradient smoothness assumption used throughout the paper, BoT provides a structured proxy for the inter-mode component of gradient variance. This is supported by empirical results in Appendix~D, where BoT consistently exhibits stronger alignment with sample-level gradient variance than entropy-based measures.

We acknowledge that BoT depends on clustering and embedding quality, which may introduce sensitivity in certain settings. However, this dependence is inherent to all response-level uncertainty methods operating in semantic space. Importantly, our empirical analysis demonstrates that despite this approximation, BoT remains robust and effective across diverse benchmarks, indicating that precise OT computation is not necessary for capturing optimization-relevant structure.

\begin{table*}[t]

\renewcommand{\arraystretch}{1.15}

\centering

\resizebox{\textwidth}{!}{

\begin{tabular}{c ccc ccc}

\toprule

\multirow{2}{*}{\textbf{Method}} &

\multicolumn{3}{c}{\textbf{NarrativeQA}} &

\multicolumn{3}{c}{\textbf{Qasper}} \\

\cmidrule(lr){2-4}\cmidrule(lr){5-7}

& \textbf{Acc} & \textbf{BLEU} & \textbf{F1}

& \textbf{Acc} & \textbf{BLEU} & \textbf{F1} \\

\midrule\midrule

Q-Hawkeye (2026)

& 71.13{\tiny$\pm$0.40} & 52.44{\tiny$\pm$0.40} & 71.63{\tiny$\pm$0.26}

& 12.98{\tiny$\pm$1.08} & 10.94{\tiny$\pm$0.44} & 23.50{\tiny$\pm$1.01} \\

EG-SPO (2026)

& 71.51{\tiny$\pm$1.12} & 52.16{\tiny$\pm$0.88} & 71.84{\tiny$\pm$0.78}

& \textbf{14.29}{\tiny$\pm$1.22} & \textbf{12.40}{\tiny$\pm$1.03} & 23.91{\tiny$\pm$0.83} \\

R2VPO (2026)

& 72.32{\tiny$\pm$0.24} & 52.12{\tiny$\pm$1.18} & 72.52{\tiny$\pm$0.23}

& 13.02{\tiny$\pm$0.28} & 9.67{\tiny$\pm$0.72} & 23.45{\tiny$\pm$0.46} \\

\midrule

GRPO + CD

& \underline{72.59}{\tiny$\pm$0.48} & \underline{53.72}{\tiny$\pm$0.89} & \underline{73.25}{\tiny$\pm$0.27}

& \underline{13.67}{\tiny$\pm$0.33} & \underline{11.28}{\tiny$\pm$0.83} & \underline{24.25}{\tiny$\pm$0.17} \\

GRPO + BoT

& \textbf{73.77}{\tiny$\pm$0.16} & \textbf{54.85}{\tiny$\pm$0.74} & \textbf{73.82}{\tiny$\pm$0.07}

& 13.91{\tiny$\pm$0.63} & 10.94{\tiny$\pm$1.49} & \textbf{24.37}{\tiny$\pm$0.64} \\

\bottomrule

\end{tabular}

}

\caption{Comparison with the most recent variance/uncertainty-aware baselines on NarrativeQA and Qasper. Best results are in \textbf{bold}, second best are \underline{underlined}.}

\label{tab:new_baselines}

\vspace{-10pt}

\end{table*}

\section{Additional Experiments}
\label{app:additional}

\noindent\textbf{Extended Baseline Comparison.}
Our primary comparisons focus on methods that are directly aligned with the scope of this work, namely response-level uncertainty modulation within the GRPO framework. These baselines are most appropriate for evaluating the specific question we study: whether uncertainty signals can effectively regulate gradient variance during group-based policy optimization. 

At the same time, we acknowledge the broader landscape of uncertainty- and variance-aware methods in policy optimization. To address this, we additionally include three most recent baselines:Q-Hawkeye \cite{xie2026q}, EG-SPO \cite{hu2026entropy}, and R\textsuperscript{2}VPO \cite{luo2026ratio}, (Table~\ref{tab:new_baselines}) which, while not strictly aligned with our formulation, provide complementary perspectives on uncertainty and stability. These methods operate at different levels of the optimization pipeline, including sample-level reweighting, token-level routing, and policy-level variance control, respectively. Evaluating them in our setting offers additional insight into how different forms of uncertainty and variance interact with GRPO-style training.

\noindent\textbf{Adapted Uncertainty and Variance-Control Baselines. }
To ensure fair comparison, all methods are adapted into a unified GRPO framework. Since these approaches are originally proposed under different objectives or training pipelines (e.g., PPO-based formulations or domain-specific settings), we reinterpret each method as a plug-in gradient modulation scheme applied on top of the GRPO update. This design isolates the effect of uncertainty or variance signals without altering the underlying optimization objective.

\noindent\textbf{Q-Hawkeye-style Uncertainty Reweighting. }
Q-Hawkeye introduces uncertainty-aware dynamic optimization by estimating uncertainty from variability in predicted scores and using it to reweight policy updates. We include this baseline as it is conceptually closest to our setting, representing sample-level uncertainty modulation within a GRPO-like framework. To adapt it to LLM QA, we define uncertainty as the variance of rollout rewards within each group, i.e., $u(q)=\mathrm{Var}(r_1,\dots,r_G)$, and apply a linear reweighting $w(q)=1-\alpha \tilde u(q)$, where $\tilde u(q)$ is a normalized variance. This adaptation captures the core idea of uncertainty-based reweighting while removing domain-specific components from the original multimodal setting.

\noindent\textbf{EG-SPO-style Entropy Gating. }
EG-SPO proposes entropy-guided routing that modulates gradient magnitude based on token-level predictive uncertainty while preserving update direction. We include it as a closely related approach that also interprets uncertainty as a control signal for gradient scaling. Since EG-SPO is originally built on PPO with token-level updates and a multi-stage pipeline, we adopt a simplified variant by aggregating token-level entropy into a response-level measure and using it to gate the group-normalized advantage in GRPO. This allows for a controlled comparison between entropy-based modulation and our geometry-aware uncertainty signals.

\noindent\textbf{R\textsuperscript{2}VPO-style Variance Damping. }
R\textsuperscript{2}VPO improves stability by controlling policy-ratio variance, replacing hard clipping with a smoother variance-based constraint. Although not an uncertainty-based method, it is highly relevant as it targets variance from a complementary source. We include a lightweight approximation that penalizes high policy-ratio variance at the response level, using $w_i = 1/(1+\lambda \mathrm{Var}_t(\rho_{i,t}))$. This preserves the core intuition of variance control without modifying the GRPO objective, enabling direct comparison with GCPO’s approach to regulating gradient variance induced by rollout diversity and reward differentiation.

\noindent
Overall, these additional baselines span multiple levels of the optimization pipeline. While they are not designed for response-level uncertainty modeling, incorporating them under a unified framework provides a broader perspective and further highlights the distinct role of geometry-aware and reward-calibrated uncertainty in GCPO.




\end{document}